\newcommand{\eg}{{e.g.}\xspace}
\newcommand{\etc}{{etc.}\xspace}
\newcommand{\ie}{{i.e.}\xspace}
\newcommand{\wrt}{{w.r.t.}\xspace}
\newtheorem{theorem}{Theorem}
\newtheorem{corollary}{Corollary}
\newtheorem{example}{Example}
\newtheorem{remark}{Remark}
\newtheorem{definition}{Definition}
\def\1{\bm{1}}
\DeclareMathAlphabet{\mathsfit}{\encodingdefault}{\sfdefault}{m}{sl}
\SetMathAlphabet{\mathsfit}{bold}{\encodingdefault}{\sfdefault}{bx}{n}
\newcommand{\softmax}{\mathrm{softmax}}
\newcommand{\inner}[2]{\left\langle #1,#2 \right\rangle}
\newcommand{\zero}{\mathbf{0}}
\newcommand{\rank}{\mathrm{rank}}
\DeclareMathOperator{\sign}{sign}
\newcommand{\EE}{\mathds{E}}
\newcommand{\wv}{\mathbf{w}}
\newcommand{\xv}{\mathbf{x}}
\newcommand{\yv}{\mathbf{y}}
\newcommand{\zv}{\mathbf{z}}
\newcommand{\bv}{\mathbf{b}}
\newcommand{\gv}{\mathbf{g}}
\newcommand{\RR}{\mathds{R}}
\newcommand{\Dtr}{\mathcal{D}_{tr}}
\newcommand{\Dpo}{\mathcal{D}_{p}}
\setlist{nolistsep}
\renewcommand{\epsilon}{\varepsilon}
\newcommand{\Gfk}{\mathds{G}}
\newcommand{\Pcal}{\mathcal{P}}
\newcommand{\Wcal}{\mathcal{W}}
\newcommand{\Xds}{\mathds{X}}
\newcommand{\Zds}{\mathds{Z}}
\newcommand{\av}{\mathbf{a}}
\newcommand{\hv}{\mathbf{h}}
\newcommand{\mv}{\mathbf{m}}
\newcommand{\pv}{\mathbf{p}}
\newcommand{\uv}{\mathbf{u}}
\newcommand{\gfk}{\mathds{J}}
\newcommand{\Wfk}{\mathscr{W}}
\newcommand{\frev}[1][f]{\rotatebox[origin=c]{180}{$#1$}}
\newcommand{\citep}{\parencite}
\newcommand{\citet}{\textcite}
\icmltitlerunning{Exploring the Limits of Model-Targeted Indiscriminate Data Poisoning Attacks}
\begin{document}

\twocolumn[
\icmltitle{Exploring the Limits of Model-Targeted Indiscriminate Data Poisoning Attacks}

\icmlsetsymbol{equal}{*}

\begin{icmlauthorlist}
\icmlauthor{Yiwei Lu}{UW,VI}
\icmlauthor{Gautam Kamath}{UW,VI}
\icmlauthor{Yaoliang Yu}{UW,VI}
\end{icmlauthorlist}

\icmlaffiliation{UW}{School of Computer Science, University of Waterloo, Canada}
\icmlaffiliation{VI}{Vector Institute}

\icmlcorrespondingauthor{Yiwei Lu}{yiwei.lu@uwaterloo.ca}

\icmlkeywords{Machine Learning, data poisoning}

\vskip 0.3in
]
\printAffiliationsAndNotice{Authors GK and YY are listed in alphabetical order.}

\begin{abstract}
Indiscriminate data poisoning attacks aim to decrease a model's test accuracy by injecting a small amount of corrupted training data. 
Despite significant interest, existing attacks remain relatively ineffective against modern machine learning (ML) architectures.
In this work, we introduce the notion of \emph{model poisoning reachability} as a technical tool to explore the intrinsic limits of data poisoning attacks towards target parameters (i.e., model-targeted attacks). We derive an easily computable threshold to establish and quantify a surprising phase transition phenomenon among popular ML models: data poisoning attacks can achieve certain target parameters only when the poisoning ratio exceeds our threshold. Building on existing parameter corruption attacks and refining the Gradient Canceling  attack, we perform extensive experiments to confirm our theoretical findings, test the predictability of our transition threshold, and significantly improve existing indiscriminate data poisoning baselines over a range of datasets and models. 
Our work highlights the critical role played by the poisoning ratio, and sheds new insights on existing empirical results, attacks and mitigation strategies in data poisoning. Our code is available at \url{https://github.com/watml/plim}.
\end{abstract}

\section{Introduction}
\label{sec:intro}

Modern machine learning (ML) models require a large amount of training data to perform well on various tasks. Such hunger for data not only increases the training cost but also introduces potential risks during the data collection process \citep{NelsonBCJRSSTX08,SzegedyZSBEGF13,KumarNLMGCSX20}. Data poisoning, where an adversary can actively inject corrupted data into dataset aggregators or passively place poisoned samples on the web for scraping~\citep{GaoBBGHFPHTN20,Wakefield16, ShejwalkarHKR21,LyuYY20}, has caused serious concerns in the ML community and inspired a number of interesting works to expose and address this threat~\citep{Goldblumetal21}.

By now many data poisoning algorithms have been proposed; see \Cref{sec:bg} for some pointers. However, in the setting of indiscriminate data poisoning, where an attacker aims to decrease the overall test accuracy by adding a small fraction of corrupted data, the effectiveness of existing attacks remains underwhelming. For example, the recent work of \citet{LuKY22} achieved 1.11\% accuracy drop for a three-layer CNN on MNIST and a 5.54\% accuracy drop for ResNet-18 on CIFAR-10, after adding $\epsilon_d = 3\%$ poisoned data and retraining. 
Part of the difficulty lies in the computational challenge: the attacker has to anticipate what would happen after retraining the model on the mixed data (clean in-house data plus poisoned data). Other empirical works seem to suggest there might also be some intrinsic barrier to data poisoning; see \Cref{sec:bg} for a detailed discussion.

In this work we focus on model-targeted attacks \citep[\eg,][]{KohSL18,SuyaMSET21} and introduce the notion of \emph{model poisoning reachability}, \ie, given (arbitrary) clean training data, what model, represented by its parameter $\wv$, can be achieved through data poisoning, and what is the minimum (relative) percentage $\epsilon_d$ of poisoned data that one has to introduce, with what algorithm? 
While model poisoning reachability intuitively depends on the clean training data, the loss and the target model we aim to achieve, we show that under mild conditions, it can be largely characterized by a simple threshold $\tau$ that is readily computable and involves no training at all.
In particular, when the poisoning percentage $\epsilon_d$ falls under $\tau$, no algorithm could achieve the target model by retraining on a mixed dataset (however crafted). On the flip side, if $\epsilon_d > \tau$, we show that Gradient Canceling (GC), a refinement of the KKT attack of \citet{KohSL18}, can achieve a given target model surprisingly efficiently. 
We further demonstrate that most ML classifiers exhibit a phase transition: they become poisoning reachable only when $\epsilon_d$ crosses the threshold $\tau$.
In contrast, regression methods can be poisoning reachable even when $\epsilon_d$ approaches 0.
Thus, our results expose the critical role played by the poisoning percentage $\epsilon_d$, and clarify the somewhat disparate empirical results in the literature (with varying $\epsilon_d$).

Empirically, we apply the GC attack and verify the \emph{model poisoning reachability} property across a wide range of ML models, from logistic regression to residual networks on various datasets. 
Moreover, our work can also be applied as a distillation device: given any target parameter (namely the teacher, however crafted or impractical) that is effective for certain purpose, we can use our threshold and GC attack to pinpoint the (minimum) amount of poisoning data that needs to be constructed in order to simulate the teacher through retraining the model (student) over the combination of clean and poisoned data. 
Indeed, using the target parameters generated by parameter corruption  \citep{SunZRLL20} as a teacher, GC is able to construct more practical and effective (student) data poisoning attacks than baseline methods.

We summarize our main contributions as follows:
\begin{itemize}[leftmargin=*,topsep=0pt,itemsep=4pt]
    \item We formalize the notion of model poisoning reachability as a technical tool to study model-targeted data poisoning and we derive an easily computable
    threshold to characterize it. 
    
    \item We quantify the critical role played by the poisoning ratio $\epsilon_d$ and we establish a surprising phase transition for ML classifiers, explaining seemingly disparate empirical results obtained with varying $\epsilon_d$.
    
    \item We perform the Gradient Canceling attack on a number of models and datasets to extensively test our results. 
    With carefully chosen target parameters, we are able to improve existing indiscriminate data poisoning baselines.
\end{itemize}

\section{Background}
\label{sec:bg}

Data poisoning, an emerging concern on modern ML systems, refers to the threat of (often passively) crafting ``poisoned'' training data so that  systems retrained on it (along with possibly clean in-house data) are skewed towards certain behaviour. 
For example, indiscriminate data poisoning \citep[\eg,][]{BiggioNL12,KL17,KohSL18,GonzalezBDPWLR17,LuKY22} aims to decrease the overall test accuracy while targeted data poisoning \citep[\eg,][]{ShafahiHNSSDG18,AghakhaniMWKV20,GuoL20,ZhuHLTSG19e} only affects certain classes. Backdoor attacks \citep[\eg,][]{GuDG17,TranLM18,ChenLLLS17,SahaSP20} that aim to trigger a particular pattern, and unlearnable examples \citep[\eg,][]{LiuC10,HuangMEBW21,YuZCYL21,FowlGCGCG21, FowlCGGBCG21,SadovalSGGGJ22,FuHLST21} that aim to protect user data.


While many algorithms have been proposed for data poisoning, their effectiveness remains largely underwhelming against neural networks, especially when $\epsilon_d$, the relative proportion of poisoned data, is small. For example, Figure 4 of \citet{LuKY22} and Table 2 of \citet{HuangMEBW21} revealed that SOTA attacks can only decrease the test accuracy noticeably when $\epsilon_d$ is sufficiently (and sometimes exceedingly, e.g., $\epsilon_d>100\%$) large. These attacks, relying on sophisticated optimization tricks, are also rather expensive to run.
On the other hand, any data poisoning attack amounts to an \emph{indirect} way of rewiring an ML model (\ie, any change must be induced by retraining the model over clean and poisoned data). Direct approaches, such as the gradient-based parameter corruption (GradPC) attack of \citet{SunZRLL20,ZhangLRSLS21}, seek to overwrite a target model \emph{directly} (\ie, without constructing any poisoned data or retraining), under a perturbation constraint specified by $\epsilon_w$, \ie, the relative change of the model parameter should be less than $\epsilon_w$. While the applicability of direct approaches may seem limited, they are suitable for exploring the limits of more realistic data poisoning attacks.

In \Cref{tab:comparison} we compare the performance of the direct approach GradPC \citep{SunZRLL20} and the indirect approach TGDA \citep{LuKY22}. The latter adds $\epsilon_d = 3\%$ poisoned data while both attacks yield comparable perturbations of the (clean) model, as measured by $\epsilon_w$. 
The difference is significant, and begs the obvious question: what caused this difference? Is it because existing data poisoning attacks are not sufficiently optimized yet, or is there some intrinsic barrier to produce certain target parameters through data poisoning? To what extent would increasing $\epsilon_d$ help, and how do we know without trying every $\epsilon_d$? These questions will be formally and experimentally explored in the sequel, with the ultimate goal (if possible) to reduce the gap between data poisoning and parameter corruption attacks with comparable $\epsilon_w$, as highlighted in \Cref{tab:comparison}.
\begin{table}[t]
    \centering
    \small
    \caption{The attack accuracy/accuracy drop (\%) on  MNIST.}
    \vspace{-1em}
    \label{tab:comparison}    
    \setlength\tabcolsep{5pt}
    \scalebox{0.84}{\begin{tabular}{cccrr}
\toprule

\multirow{2}{*}[-.6ex]{\bf Model}
 & Clean & \bf TGDA & \multicolumn{2}{c}{\bf GradPC}\\ 
\cmidrule(l{-1pt}r{2pt}){2-3}\cmidrule(l{2pt}r{0pt}){4-5}
 & Acc. & Accuracy/Drop & $\epsilon_w=0.5$ &$\epsilon_w=1$ \\

\midrule
LR & 92.35 & 89.56 / 2.79 ($\epsilon_w =2.45$) & 69.80 / 22.55 & \bf 21.48 / 70.87 \\

NN & 98.04 & 96.54 / 1.50 ($\epsilon_w =0.55$) & 76.51 / 20.03 & \bf 31.14 / 66.90\\

CNN & 99.13 & 98.02 / 1.11 ($\epsilon_w =0.74$) & 73.24 / 24.78 & \bf 12.98 / 86.15  \\
\bottomrule
\end{tabular}
}
\vspace{-15pt}
\end{table}

\textbf{Connection with Learning Theory:} 
There has been significant work on training-time robustness in the learning theory literature, primarily focused on poisoning \emph{worst-case distributions}. Two models of robust PAC learning \citep{FrenayV13,NatarajanDRT13}, slightly rephrased for the sake of comparison, include the malicious noise model, where the adversary adds points \citep[\eg,][]{KearnsL88,CesabianchiDFSS99}, and the nasty noise model, where the adversary may both add and remove points \citep[\eg,][]{BshoutyEK02,BalcanBHS22}. 
Many of these theoretical results show strong computational barriers against robust learning for even the most basic problems.
Although our setting is similar to the malicious noise model (and we touch a bit on the nasty noise model in \Cref{app:replace}), there are three major differences with the majority of the theory literature: (1) our attacks address distributions that arise in practice, which differ from worst-case distributions; (2) while other attacks flip labels, we consider ``clean label'' attacks which are not visibly mislabeled; (3) we focus on model-targeted attacks whose goal is to induce certain target parameters while the above-mentioned references focus directly on decreasing accuracy on the test sample.
\section{Theoretical Results}

 \label{sec:method}

In this section we formalize the notion of model poisoning reachability as a technical tool for studying model-targeted data poisoning. We further derive an easily computable threshold $\tau$ and reveal that model-targeted data poisoning attacks are effective only when $\epsilon_d$, the (relative) percentage of poisoning data, crosses $\tau$.

\textbf{Notation and Preliminaries.} Let $\ell(\zv, \wv)$ be our loss that measures the fitness of our model $\wv$ on data $\zv \in \Zds$, \eg, $\zv = (\xv, y)$ for supervised learning and $\zv = \xv$ for unsupervised learning. Let $\Pcal = \Pcal(\Zds)$ denote the set of all distributions on $\Zds$, and we abstract the training data as an (empirical) distribution\footnote{For convenience in this work we do not distinguish the (clean) training set from the training distribution, \ie, $\mu$ can be empirical.} $\mu\in \Pcal$. For any given model $\wv$ and training distribution $\mu$, is it possible to construct a poisoning set, denoted by another (empirical) distribution $\nu$, such that $\wv$ minimizes $\ell$ over the mixed distribution $\chi = (1-\lambda)\mu + \lambda \nu$, where $\lambda = \tfrac{\epsilon_d}{1+\epsilon_d} \in [0,1]$ is the proportion of poisoning data.
To account for possible nonconvexity of the loss $\ell$, we relax the optimality of a model $\wv$ to simply have vanishing (sub)gradient. More formally, let 
\begin{align}
\label{eq:grad-point}
\gv(\zv) = \gv(\zv; \wv) = \nabla_{\wv} \ell(\zv; \wv)  
\end{align}
be the gradient vector with respect to a fixed model $\wv$ evaluated at the data $\zv$. 
For practical reasons (\eg, to evade possible defenses or to account for the technical capabilities of an attacker) we also restrict the poisoning distribution $\nu$ into a convex subset $\Gamma \subseteq \Pcal$ of admissible distributions. For instance, we may consider 
\begin{align}
\Gamma = \Gamma_{\mu, \delta} := \{\gamma: \|\gamma - \mu\| \leq \delta\}, 
\end{align}
where $\|\cdot\|$ 
denotes (say) the Wasserstein distance. 
By definition we always have $\mu \in \Gamma$. 
For each $\nu \in \Gamma$, define 
\begin{align}
\gv(\nu) = \gv(\nu; \wv) := \EE_{\zv \sim \nu} \gv(\zv; \wv),
\end{align}
i.e., the average gradient \wrt the distribution $\nu$. Clearly, 
\begin{align}
\label{eq:grad-set}
\Gfk = \Gfk(\Gamma) := \{\gv(\nu): \nu \in \Gamma\}
\end{align}
is a subset of the closed convex hull of all gradient vectors. In fact, equality holds when $\Gamma=\Pcal$ (\eg, $\delta = \infty$). 

\subsection{Model Poisoning Reachability}
We can now state our fundamental problem of interest:
\begin{definition}[Model Poisoning Reachability] 
We say a target parameter $\wv$ is $(\ell, \mu, \Gamma, \lambda)$-poisoning reachable
if there exists some poisoning distribution $\nu\in\Gamma$ such that
\begin{align}
\gv(\chi; \wv) 
= (1-\lambda) \gv(\mu; \wv) + \lambda \gv(\nu; \wv) = \zero, 
\end{align}
\ie, the parameter $\wv$ has vanishing gradient (\wrt loss $\ell$) over the mixed distribution $\chi = (1-\lambda) \mu + \lambda \nu$.
\label{def:mp}
\end{definition}
When the loss $\ell$, training distribution $\mu$, and admissible poisoning distributions $\Gamma$ are evident, we will simply say the parameter $\wv$ is $\lambda$-poisoning reachable, or poisoning reachable if it is $\lambda$-poisoning reachable for some $\lambda \in [0,1]$.

We make three further remarks regarding \Cref{def:mp}: (a) If we are interested in more quantitative results about data poisoning, for example, is it possible to craft a poisoning set such that retraining on the mixed distribution would decrease test accuracy by a large margin, we need only specify a set of target models $\wv \in \Wcal$ that all decrease the test accuracy as required\footnote{As pointed out by a reviewer, this may not be computationally feasible if one is too ambitious about the set $\Wcal$.}, and we say data poisoning is successful if any $\wv \in \Wcal$ is ($\lambda$-) poisoning reachable.
(b) \Cref{def:mp} leaves out the computational aspects of data poisoning, \ie, how efficiently we can find such a poisoning distribution $\nu$ (whenever it exists). This will be studied in \Cref{sec:gc}, using a gradient-based algorithm inspired directly by our definition. 
(c) We could also add other requirements, such as curvature or stability, to \Cref{def:mp}.

Given the above formalization, the following characterization is immediate: 
\begin{theorem}
A target parameter $\wv$ is $\lambda$-poisoning reachable iff 
$
\zero \in \Gfk^\lambda = \Gfk^\lambda(\gv(\mu)) := \{(1 - \lambda) \gv(\mu) + \lambda \gv: \gv \in \Gfk\}. 
$    
\label{thm:main}
\end{theorem}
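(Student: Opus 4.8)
The plan is to prove both directions at once by chaining equivalences, since the statement is essentially a translation of \Cref{def:mp} into set-membership language. I would begin by writing out what $\lambda$-poisoning reachability means: by definition, $\wv$ is $\lambda$-poisoning reachable exactly when there exists some $\nu \in \Gamma$ with $(1-\lambda)\gv(\mu) + \lambda\gv(\nu) = \zero$. The goal is then to show this existence statement is equivalent to $\zero \in \Gfk^\lambda$.

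The key move is to pass from the quantifier over poisoning distributions $\nu \in \Gamma$ to a quantifier over gradient vectors $\gv \in \Gfk$. This is legitimate because, by the defining relation in \Cref{eq:grad-set}, the set $\Gfk = \{\gv(\nu) : \nu \in \Gamma\}$ is precisely the image of $\Gamma$ under the averaging map $\nu \mapsto \gv(\nu)$. Hence ``there exists $\nu \in \Gamma$ with $\gv(\nu) = \gv$'' is equivalent to ``$\gv \in \Gfk$,'' and the reachability condition becomes: there exists $\gv \in \Gfk$ with $(1-\lambda)\gv(\mu) + \lambda\gv = \zero$. By the definition of $\Gfk^\lambda = \{(1-\lambda)\gv(\mu) + \lambda\gv : \gv \in \Gfk\}$, this last statement is exactly $\zero \in \Gfk^\lambda$, which closes the chain of equivalences.

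Since every step is a biconditional obtained by unfolding a definition, there is no genuine obstacle here; this is why the text can fairly call the characterization ``immediate.'' The only point worth an explicit sentence is that the averaging map is surjective onto $\Gfk$ \emph{by construction}, so nothing is lost when we replace the distribution $\nu$ by its average gradient $\gv(\nu)$ and back again. I would state this observation once, then present the argument as a short displayed chain of ``iff'' lines running from \Cref{def:mp} through \Cref{eq:grad-set} to the definition of $\Gfk^\lambda$.
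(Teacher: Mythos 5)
Your proof is correct and is essentially the paper's own argument: the paper gives no separate proof, stating the characterization is ``immediate,'' precisely because it follows by unfolding \Cref{def:mp} and the definition of $\Gfk$ in \eqref{eq:grad-set} exactly as you do. Your explicit remark that the map $\nu \mapsto \gv(\nu)$ is surjective onto $\Gfk$ by construction is the one observation that makes the chain of biconditionals rigorous, and it matches the paper's intent.
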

Since $\Gfk$ (see equations~\eqref{eq:grad-point}-\eqref{eq:grad-set}) is clearly convex, the subsets $\Gfk^\lambda$ are all convex
and increasing with respect to $\lambda$, \ie, 
\begin{align}
\gv(\mu) = \Gfk^0 \subseteq \Gfk^\lambda \uparrow \subseteq \Gfk^1 = \Gfk. \nonumber
\end{align}
Recall that $\lambda = \tfrac{\epsilon_d}{1+\epsilon_d}$ 
is the (absolute) proportion of the poisoned set. Thus, we conclude intuitively that the larger $\epsilon_d$ (equivalently $\lambda$) is, the easier it is to induce any target model $\wv$ on any training distribution $\mu$. 
In particular, the special case $\lambda = 1$ corresponds to the so-called ``unlearnable examples'' \citep{LiuC10,HuangMEBW21,YuZCYL21,FowlGCGCG21, FowlCGGBCG21,SadovalSGGGJ22,FuHLST21}, where an attacker is allowed to change the entire training set (\ie, empirical distribution $\mu$).

Conversely, we can also conclude from \Cref{thm:main} that if $\zero \not \in \Gfk(\Gamma)$, then data poisoning, with any budget $\epsilon_d$, will not be successful in producing the target parameter $\wv$. If $\zero \not \in \Gfk(\Pcal)$, then no training distribution can yield $\wv$. In particular, data poisoning will not be successful in producing $\wv$ even if $\epsilon_w = \infty$.

Let us give some examples to illustrate our results so far. 
\begin{example}[Least-square regression] 
Consider 

\vspace{-1.9em}
\begin{align}
\ell(\zv; \wv) = \tfrac12 (y - \wv^\top \xv)^2, \mbox{ where }\  \zv = (\xv, y).\nonumber
\end{align}
\vspace{-1.9em}

Clearly, we have 

\vspace{-1.9em}
\begin{align}
    \gv(\xv, y) &= (\wv^\top \xv - y) \xv = (\xv \xv^\top) \wv - y \xv, \nonumber
\end{align}
\vspace{-1.9em}

and hence $\gv(\mu) = \Sigma \wv - \mv$,
where $\Sigma = \EE_{\xv \sim \mu}\xv\xv^\top$ and $\mv = \EE_{(\xv, y) \sim \mu} y\xv$. 
For simplicity let us assume $\Zds = \RR^d \times \RR$ and $\Gamma = \Pcal$ so that $\Gfk = \RR^d$ (by considering product distributions where $\xv$ concentrates on a single point). 
Therefore, we conclude from \Cref{thm:main} that data poisoning with any $\epsilon_d > 0$ is possible for least-square regression. 
The same conclusion holds even if we add regularization to $\wv$ (which, we recall, is fixed).
\label{exm:ls}
\end{example}
\vspace{-.5em}
\subsection{Scalar Output Linear Models} 
For linear models we can further simplify the iff condition in \Cref{thm:main}. 
We begin with the following result: 
\begin{theorem}
Suppose $\Gamma = \Pcal$ contains all distributions, $\ell((\xv, y); \wv) = l(\wv^\top \xv, y)$ for some univariate loss $l$, and $\inner{\wv}{\gv(\mu)} \ne 0$. Then, $\wv$ is $\lambda$-poisoning reachable iff 
\begin{align}
\label{eq:linearthreshold}
\lambda &> \max\left\{\tfrac{\inner{\wv}{\gv(\mu)}}{\inner{\wv}{\gv(\mu)} -a}, ~  \tfrac{-\inner{\wv}{\gv(\mu)}}{b - \inner{\wv}{\gv(\mu)}} \right\}, \quad \mbox{where} 
\\
a &= \inf_{(\xv, y)\in \Zds}  (\wv^\top\xv) \cdot l'(\wv^\top\xv, y),\nonumber \\
b &= \sup_{(\xv, y) \in \Zds}  (\wv^\top\xv) \cdot l'(\wv^\top\xv, y),\nonumber
\end{align}
with equality attained if the maximum is attained.
\label{thm:lr}
\end{theorem}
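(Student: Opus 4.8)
The plan is to specialize the membership criterion from \cref{thm:main}, namely that $\wv$ is $\lambda$-poisoning reachable iff $\zero\in\Gfk^\lambda$, to the structure forced by a scalar-output linear model, and then read off a one-dimensional condition by projecting everything onto the fixed direction $\wv$. First I would record the shape of the gradient: since $\ell((\xv,y);\wv)=l(\wv^\top\xv,y)$, the chain rule gives $\gv(\xv,y)=l'(\wv^\top\xv,y)\,\xv$, so every gradient vector is a scalar multiple of its own $\xv$. Taking the inner product with $\wv$ yields the identity $\inner{\wv}{\gv(\xv,y)}=(\wv^\top\xv)\,l'(\wv^\top\xv,y)$, which is exactly the scalar whose infimum and supremum define $a$ and $b$. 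Hence $\inner{\wv}{\gv}\in[a,b]$ for every $\gv\in\Gfk$, and averaging over $\mu$ shows $c:=\inner{\wv}{\gv(\mu)}\in[a,b]$ as well.

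The crux is to prove that $\Gfk=\Gfk(\Pcal)$ is precisely the slab $\{\vv : a\le\inner{\wv}{\vv}\le b\}$. The inclusion ``$\subseteq$'' is the observation above. For ``$\supseteq$'' I would decompose $\xv=\tfrac{t}{\|\wv\|^2}\wv+\xv_\perp$ with $t=\wv^\top\xv$ and $\xv_\perp\perp\wv$, so that $\gv(\xv,y)=l'(t,y)\big(\tfrac{t}{\|\wv\|^2}\wv+\xv_\perp\big)$. Its $\wv$-component is pinned to $s=t\,l'(t,y)$, while its orthogonal component $l'(t,y)\,\xv_\perp$ can be made an arbitrary vector of $\wv^\perp$ by choosing $\xv_\perp$ freely (using that $\Zds$ is rich enough for $\xv$ to range over $\RR^d$, and that $l'\ne 0$ whenever $s\ne 0$). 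Taking convex combinations of two such gradients whose $\wv$-levels $s_1,s_2$ bracket a desired interior value, and whose free orthogonal parts combine to any target, fills every level $s\in(\inf S,\sup S)=(a,b)$ with the entire hyperplane $\inner{\wv}{\cdot}=s$; passing to the closed convex hull, which equals $\Gfk$ for $\Gamma=\Pcal$, adds the boundary levels. This is the step I expect to be the main obstacle, both because it is where the geometry really happens and because the boundary/attainment behaviour is delicate.

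With the slab description in hand, membership in $\Gfk$ depends only on the $\wv$-component. Applying \cref{thm:main}, for $\lambda>0$ reachability means $\zero=(1-\lambda)\gv(\mu)+\lambda\gv$ for some $\gv\in\Gfk$, i.e. $\gv=-\tfrac{1-\lambda}{\lambda}\gv(\mu)$ must lie in the slab, which (recalling $c\ne 0$) is exactly the scalar condition $a\le-\tfrac{1-\lambda}{\lambda}c\le b$. I would then finish by a short case analysis on the sign of $c$. When $c>0$ the upper bound is automatic (since $b\ge c>0$) and the lower bound rearranges, using $c-a>0$, to $\lambda\ge\tfrac{c}{c-a}$; when $c<0$ the lower bound is automatic and the upper bound rearranges to $\lambda\ge\tfrac{-c}{b-c}$. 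Because the complementary term is nonpositive in each case, the two thresholds combine into the single $\max\{\cdot,\cdot\}$ stated in the theorem.

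Finally I would address the strictness. Since the achievable set of $\wv$-levels is the convex hull of $S=\{(\wv^\top\xv)\,l'(\wv^\top\xv,y)\}$, the endpoint level $a$ (resp. $b$) belongs to $\Gfk$ precisely when the corresponding extremum is attained by some admissible $\zv$; this turns the derived non-strict inequality into the strict ``$\lambda>\max\{\cdots\}$'', with equality exactly when the relevant supremum/infimum is attained, matching the theorem's proviso. The remaining bookkeeping---the degenerate cases $a=c$ or $b=c$ forcing an infinite threshold, and the possibility $a=-\infty$ or $b=+\infty$---follows directly from the same scalar inequality and needs no separate argument.
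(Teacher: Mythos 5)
Your proposal is correct in substance, but it takes a genuinely different route from the paper. The paper deduces \Cref{thm:lr} from \Cref{thm:lm}: necessity is the same inner-product projection you use, but sufficiency is obtained by pushing an arbitrary scalar-condition-achieving distribution forward under the map $T(\xv,y)=\bigl(\tfrac{\inner{\wv}{\xv}}{\inner{\wv}{\gv(\mu)}}\gv(\mu),\,y\bigr)$, which preserves $\wv^\top\xv$ (hence every level $t\cdot l'(t,y)$) while collapsing all poison points onto the line spanned by $\gv(\mu)$; the mixed gradient is then a scalar multiple of $\gv(\mu)$ and vanishes exactly when the one-dimensional condition holds. You instead prove a stronger geometric lemma---that $\Gfk(\Pcal)$ is the slab $\{\vv: a\le\inner{\wv}{\vv}\le b\}$ with endpoints present iff attained---via orthogonal decomposition and two-point bracketing, and then test membership of $-\tfrac{1-\lambda}{\lambda}\gv(\mu)$. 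Your route buys a transparent picture of the entire reachable gradient set, and you also carry out the interval-to-threshold conversion \eqref{eq:linearthreshold} with the sign case analysis on $\inner{\wv}{\gv(\mu)}$ explicitly, which the paper leaves implicit; the paper's pushforward buys generality (it handles restricted $\Gamma\supseteq T_{\#}\Pi$, not only $\Gamma=\Pcal$) and the practically useful byproduct that the poisoning distribution can always be supported on a single line. Both arguments implicitly need the domain to be rich (you need $\xv$ to range over $\RR^d$; the paper needs the relevant multiples of $\gv(\mu)$ to lie in $\Xds$), so you are not worse off on that count.

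Two wrinkles you should patch. First, your mid-proof sentence that ``passing to the closed convex hull \ldots adds the boundary levels'' contradicts your own (correct) final paragraph: $\Gfk$ is the set of \emph{means}, and an endpoint level lies in it only when the corresponding infimum/supremum is attained; delete the closed-convex-hull claim. Second, at an attained endpoint your bracketing argument is unavailable (there are no levels strictly beyond $a$), yet equality in \eqref{eq:linearthreshold} requires the \emph{specific} vector $-\tfrac{1-\lambda}{\lambda}\gv(\mu)$ at level $a$, not merely some vector at that level. This is fixable in one line: if $a\ne 0$ is attained at $(\xv_0,y_0)$ then $l'(\wv^\top\xv_0,y_0)\ne 0$, so point masses at $(\xv_0+\xv_\perp,\,y_0)$ with $\xv_\perp\in\wv^\perp$ sweep the whole hyperplane at level $a$; and if $a=0$ the equality case forces $\lambda=1$ and the required vector $\zero$, which is reached by the attaining point with vanishing orthogonal part.
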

\Cref{thm:lr} follows from the more general \Cref{thm:lm} in \Cref{sec:proof}, where we further remove the restriction $\Gamma = \Pcal$.
The condition $\inner{\wv}{\gv(\mu)} \ne 0$ can be checked easily \emph{a priori}; see \Cref{rem:deg} (\Cref{sec:proof}) for discussions on when it fails. 
\Cref{rem:breakdown} (\Cref{sec:proof}) draws further connection between our result and the breakdown point in robust statistics. 
Compared to the more general \Cref{thm:main}, \Cref{thm:lr} exploits the linear structure to simplify the set $\Gfk$ to basically an interval and hence the condition \eqref{eq:linearthreshold} is much easier to verify. Indeed, consider \Cref{exm:ls} again. It is clear that $l'(t, y) = t-y$, whence $a=-\infty$ and $b=\infty$. Thus, we verify more easily that data poisoning succeeds on least-square regression for any $\epsilon_d > 0$. 

\begin{figure*}
    \centering    
    \includegraphics[width=1.0\textwidth]{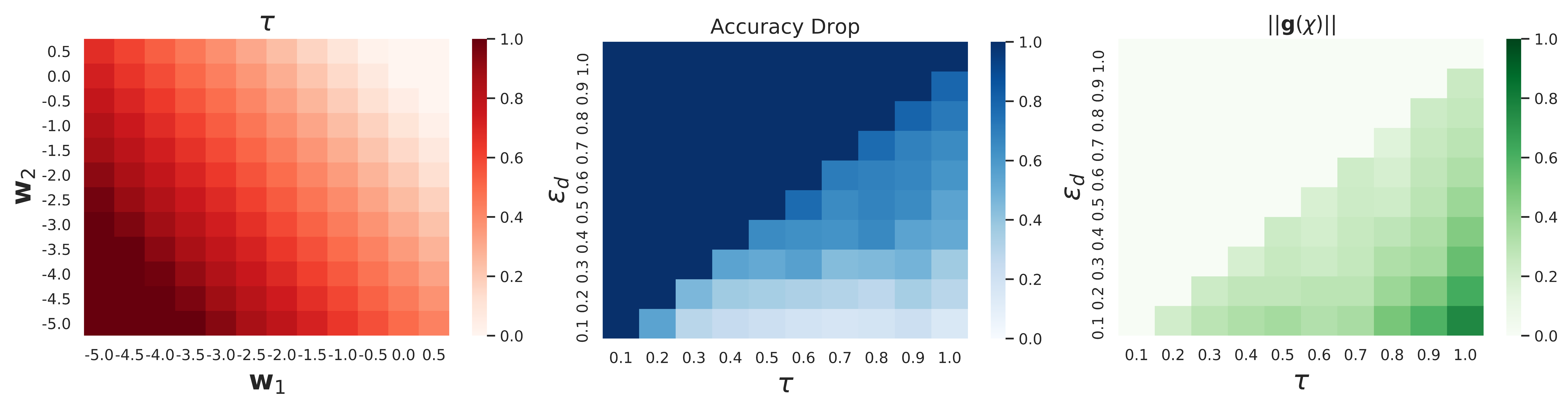}
    \vspace{-2em}
    \caption{Logistic regression on the 2d OR dataset that verifies the transitioning threshold $\tau$ in \Cref{thm:linear}. 
    \textbf{Left}: $\tau$ \wrt target models $\wv\in\RR^2$, which all achieve 0 accuracy; \textbf{Middle}: accuracy drop due to the gradient canceling attack in \Cref{sec:gc}. Indeed, poisoning successfully induces the target model $\wv$ as long as $\epsilon_d\geq \tau$;  \textbf{Right}: norm of gradient \wrt model $\wv$ over the mixed distribution $\chi$, with $\epsilon_d$ the relative proportion of poisoned data. In general, the closer $\epsilon_d$ gets above $\tau$, the smaller the gradient norm, which is an indication of the target model being more achievable through data poisoning.}
    \vspace{-1em}
    \label{fig:heatmap_or} 
\end{figure*}

The next example reveals a surprising phase transition in terms of the poisoning proportion $\lambda$ (or equivalently $\epsilon_d$):
\begin{example}[Logistic regression]
    Consider now 
\begin{align}
\ell(\zv; \wv) = \log(1+\exp(-\wv^\top\tilde\xv)),\nonumber
\end{align}
where we have absorbed the binary label $y$ into $\tilde\xv$ (\eg, $\tilde\xv \gets y \xv$).
Clearly, we have 
$    \gv(\tilde\xv) = -\tfrac{1}{1+\exp(\wv^\top\tilde\xv)} \tilde\xv$.
On the direction $\wv$, for any distribution $\mu$ we have 
\begin{align}
   \!\!-\Wfk(\tfrac1e) \!=\! \inf_{t} \tfrac{-t}{1+\exp(t)} \leq \inner{\wv}{\gv(\mu)} \leq \sup_{t} \tfrac{-t}{1+\exp(t)},\nonumber
\end{align}
where the left-hand side is Lambert's W function and the right-hand side is clearly $\infty$. Therefore, suppose $\Xds = \RR^d$ and $\Gamma = \Pcal$, we have 
\begin{align}
    \Gfk = \{\gv: \wv^\top \gv \geq - \Wfk(1/e) \approx -0.28\},\nonumber
\end{align}
which is not the entire space! Consequently, if 
\begin{align}
    \lambda < \tfrac{\inner{\wv}{\gv(\mu)}}{\inner{\wv}{\gv(\mu)} + \Wfk(1/e)} \iff \epsilon_d < \tau := \max\{\tfrac{\inner{\wv}{\gv(\mu)}}{\Wfk(1/e)}, 0\},   \label{eq:transition}
\end{align}
then any poisoning distribution $\nu$ (with any support) cannot 
produce $\wv$ (along with training distribution $\mu$)!
\label{example:log}
\end{example}

By simply changing $\tilde\xv \gets y\xv$ and then dropping $y$ we immediately obtain from \Cref{thm:lr} sufficient and necessary conditions for the poisoning reachability of binary margin classifiers. In particular, we record the following result:
\begin{corollary}[Binary Margin Classifier]
Consider linear models with loss $\ell(\tilde\xv; \wv) = l(\wv^\top \tilde\xv)$. 

Suppose $\Gamma = \Pcal$ consist of all distributions on $\tilde\Xds$ and $\inner{\wv}{\gv(\mu)}\ne 0$. 
Define 
\begin{align}
a := \inf_{t\in \wv^\top\tilde\Xds} t \cdot l'(t), \quad
b := \sup_{t\in \wv^\top\tilde\Xds} t \cdot l'(t).\nonumber
\end{align} 
Then, 
a target parameter $\wv$ is $\lambda$-poisoning reachable iff 
\eqref{eq:linearthreshold} holds (with equality attained if the maximum there is attained).
\label{thm:linear}
\end{corollary}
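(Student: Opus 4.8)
The plan is to obtain \Cref{thm:linear} as a direct specialization of \Cref{thm:lr} under the change of variables that absorbs the binary label into the feature vector. Since \Cref{thm:lr} is stated for a scalar-output linear model whose loss has the form $l(\wv^\top\xv, y)$, while the margin classifier carries a loss $l(\wv^\top\tilde\xv)$ depending on the data only through the single scalar $t = \wv^\top\tilde\xv$, the first thing to do is exhibit the margin setting as a legitimate instance of the scalar-output framework and then verify that every threshold quantity transforms correctly.

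First I would set $\tilde\xv \gets y\xv$, so that the margin $y\,\wv^\top\xv = \wv^\top\tilde\xv$ and the classification loss $l(\wv^\top\tilde\xv)$ becomes a loss $l(t, y) := l(t)$ that happens not to depend on $y$. This places us squarely in the hypotheses of \Cref{thm:lr} with admissible set $\Gamma = \Pcal$, now taken over the transformed sample space $\tilde\Xds = \{y\xv : (\xv, y)\in\Zds\}$. Next I would compute the gradient in the margin parametrization, $\gv(\tilde\xv) = l'(\wv^\top\tilde\xv)\,\tilde\xv$, whence $\inner{\wv}{\gv(\tilde\xv)} = (\wv^\top\tilde\xv)\,l'(\wv^\top\tilde\xv) = t\,l'(t)$ with $t = \wv^\top\tilde\xv$. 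Taking the infimum and supremum over the admissible data then gives exactly $a = \inf_{t\in\wv^\top\tilde\Xds} t\,l'(t)$ and $b = \sup_{t\in\wv^\top\tilde\Xds} t\,l'(t)$, matching the definitions in the corollary with the $(\wv^\top\xv)\cdot l'(\wv^\top\xv, y)$ quantities of \Cref{thm:lr}. The population quantity $\inner{\wv}{\gv(\mu)} = \EE_{\tilde\xv\sim\mu}[(\wv^\top\tilde\xv)\,l'(\wv^\top\tilde\xv)]$ is likewise unchanged by the substitution, so the hypothesis $\inner{\wv}{\gv(\mu)}\ne 0$ carries over verbatim.

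With these identifications in place, the iff condition \eqref{eq:linearthreshold}, together with the claim that equality is attained whenever the maximum is attained, follows immediately from \Cref{thm:lr}. The only point demanding care---and the closest thing to an obstacle---is verifying that passing to $\tilde\Xds$ does not alter the admissible gradient set: because $\Gamma = \Pcal$ permits point masses (and hence distributions concentrating $\tilde\xv$ at any single atom, with arbitrary components orthogonal to $\wv$), the reachable values of the scalar $\inner{\wv}{\gv(\nu)}$ still sweep out the interval between $a$ and $b$ while the orthogonal directions of $\gv$ remain free. Thus the interval (half-space/slab) structure of $\Gfk$ exploited in the proof of \Cref{thm:lr} is preserved under the relabeling, and once this is checked no further work is needed.
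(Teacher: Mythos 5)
Your proposal is correct and follows essentially the same route as the paper, which obtains \Cref{thm:linear} from \Cref{thm:lr} precisely by the substitution $\tilde\xv \gets y\xv$ and dropping the (now vacuous) label dependence, so that $a$ and $b$ specialize to $\inf$ and $\sup$ of $t\cdot l'(t)$ over $t \in \wv^\top\tilde\Xds$. Your additional check that point masses in $\Gamma = \Pcal$ preserve the interval structure of the reachable gradient set is a nice explicit verification of what the paper treats as immediate.
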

The standard margin losses are decreasing, such as the logistic loss in \Cref{example:log}, the exponential loss in Adaboost, and the hinge loss in SVM. 
When $\Xds = \RR^d$ is unbounded, typically $b=\infty$ but $a > -\infty$, leading to a common phase transition phenomenon: data poisoning against these losses succeeds in producing a target parameter $\wv$ iff $\lambda$ crosses the threshold in \eqref{eq:linearthreshold}. In particular, any target parameter $\wv$ such that $\inner{\wv}{\gv(\mu)} < 0$ is always poisoning reachable for any $\lambda > 0$. 
Interestingly, \citet[Proposition 3]{KohSL18} showed that if a model is poisoning reachable, then it (often) can be poisoned to by a distribution $\nu$ supported on two distinct points (which however does not imply diminishing $\epsilon_d$ due to repetitions). \Cref{thm:linear} provides a definitive answer on when a model is poisoning reachable and hence complements the results of \citet{KohSL18}.

We emphasize that with any further restrictions on the poisoning distribution (such that $\Gamma \subsetneq \Pcal$), condition \eqref{eq:linearthreshold} remains to be necessary: data poisoning is apparently even harder in this case. 
For nonlinear models with a fixed feature map $\phi$ (such as kernel methods), our results extend immediately, after the obvious change-of-variable $\xv \gets \phi(\xv)$.

\Cref{fig:heatmap_or} illustrates the transition threshold $\tau$ in \eqref{eq:transition} on the simple OR dataset (where each of the four points is repeated 50 times with small Gaussian perturbation, see \Cref{app:add_exp} for details). Logistic regression (LR), trained on the clean data, achieves perfect accuracy. In \Cref{fig:heatmap_or} (left), each grid point represents a target parameter $\wv = (w_1, w_2)$, all of which achieve 0 test accuracy (\ie, malicious models). The heat map indicates the threshold $\tau$ for each $\wv$, which, as predicted by our theory, is the percentage of poisoning required to achieve $\wv$ through retraining. In \Cref{fig:heatmap_or} (middle) we run the gradient canceling attack (see \Cref{sec:gc}) with varying percentage $\epsilon_d$ and verify that indeed we can reduce the 100\% clean accuracy to 0\% iff $\epsilon_d \geq \tau$. In \Cref{fig:heatmap_or} (right) we plot the magnitude of the gradient of the target parameter $\wv$ over the mixed dataset (clean training data plus poisoned data), as an approximate measure of how close $\wv$ can be achieved by retraining on the mixed dataset.
Overall, the larger $\epsilon_d$ is, the larger the accuracy drop is (not surprisingly) and the smaller the gradient norm is, with a clear transition once $\epsilon_d$ crosses $\tau$ (perhaps surprisingly).

\subsection{Multiple Output Linear Models}
Next, we extend our results to multiple outputs (classes):
\begin{restatable}[Multiclass]{theorem}{LMC-sim}
Consider $\ell(\xv, \yv; W) = l(W^\top\xv, \yv)$ for some loss $l$. 
Then\footnote{We use the notation $\av \otimes \bv := \av \bv^\top$ for two column vectors.}, 
\begin{align}
\label{eq:mc-gradw}
    G(\xv, \yv) := \nabla_{W}\ell(\xv, \yv; W) &= \xv \otimes \nabla l(W^\top\xv, \yv).
\end{align}
Suppose $W^\top G(\mu)$ is non-degenerate and $\Gamma = \Pcal$ contains all distributions.  
Then, $W$ is $\lambda$-poisoning reachable
iff 
\begin{align}
\label{eq:mc-mp2}
    \zero \in (1-\lambda) W^\top G(\mu) + \lambda \{W^\top G(\nu): \nu \in \Gamma\}.
\end{align}
\label{thm:lmc}
\end{restatable}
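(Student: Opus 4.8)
The plan is to deduce \Cref{thm:lmc} from the general characterization in \Cref{thm:main}, which already guarantees that $W$ is $\lambda$-poisoning reachable iff $\zero \in S$, where $S := (1-\lambda)G(\mu) + \lambda\Gfk$ and $\Gfk = \{G(\nu):\nu\in\Gamma\}$ is (since $\Gamma=\Pcal$) the closed convex hull $\overline{\conv}\{G(\xv,\yv)\}$. Writing $\pi$ for the linear map $\pi(G) := W^\top G$ on $d\times c$ matrices, the asserted condition \eqref{eq:mc-mp2} is exactly $\zero\in\pi(S)$ by linearity of $\pi$. Thus the whole theorem reduces to the equivalence $\zero\in S \iff \zero\in\pi(S)$. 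The forward implication is immediate: left-multiplying the vanishing-gradient identity $(1-\lambda)G(\mu)+\lambda G(\nu)=\zero$ by $W^\top$ produces a witness for $\zero\in\pi(S)$.

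The content is the converse, which I would obtain by showing that $S$ is a \emph{cylinder} along $\ker\pi$, i.e. $S + \ker\pi = S$; then any $H\in S$ with $\pi(H)=\zero$ (a witness for $\zero\in\pi(S)$) yields $\zero = H+(-H)\in S+\ker\pi=S$. Since translating by $(1-\lambda)G(\mu)$ and scaling by $\lambda>0$ preserve the cylinder property, it suffices to prove $\Gfk + \ker\pi = \Gfk$, equivalently $\ker\pi\subseteq\operatorname{rec}(\Gfk)$, the recession cone of the closed convex set $\Gfk$.

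The key structural step exploits $G(\xv,\yv) = \xv\otimes\nabla l(W^\top\xv,\yv)$. Fixing $(\xv_0,\yv)$ and any $\mathbf{n}\in\ker W^\top$, set $\vv:=\nabla l(W^\top\xv_0,\yv)$; because $W^\top(\xv_0+s\mathbf{n})=W^\top\xv_0$, the output-gradient stays equal to $\vv$ and the points $(\xv_0+s\mathbf{n})\otimes\vv = \xv_0\otimes\vv + s\,(\mathbf{n}\otimes\vv)$ all lie in $\Gfk$. Letting $s\to\pm\infty$ and using closedness of $\Gfk$ shows $\mathbf{n}\otimes\vv\in\operatorname{rec}(\Gfk)$ for every $\mathbf{n}\in\ker W^\top$ and every attainable output-gradient $\vv$. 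Since $\ker\pi$ consists precisely of the matrices whose columns lie in $\ker W^\top$, it is spanned by rank-one matrices $\mathbf{n}\otimes\vv$ once the attainable $\vv$'s span $\RR^c$; as $\operatorname{rec}(\Gfk)$ is a convex cone (closed under the $\pm$ present here because $\ker W^\top$ is a subspace), this gives $\ker\pi\subseteq\operatorname{rec}(\Gfk)$, provided the output-gradients span $\RR^c$.

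This spanning is exactly where the non-degeneracy hypothesis enters: $W^\top G(\mu) = \EE_{\mu}\!\left[(W^\top\xv)\,\nabla l(W^\top\xv,\yv)^\top\right]$ has rows that are weighted combinations of the output-gradients $\nabla l$, so its invertibility forces these vectors to span $\RR^c$. This mirrors the scalar case $c=1$, where $\ker\pi=\wv^\perp$ and non-degeneracy reads $\inner{\wv}{\gv(\mu)}\neq 0$, recovering \Cref{thm:lr}. I expect the main obstacle to be the recession-cone bookkeeping: carefully invoking closedness and the identity $\Gfk = \overline{\conv}\{G(\xv,\yv)\}$ to certify that the unbounded one-parameter families above are genuine recession directions, and verifying that conic combinations of the $\mathbf{n}\otimes\vv$ fill out all of $\ker\pi$ rather than a proper subcone (which is precisely what the spanning conclusion from non-degeneracy secures).
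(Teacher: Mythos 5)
Your skeleton matches the paper up to the point where the real work starts: the reduction via \Cref{thm:main} and the forward direction (left-multiply the vanishing-gradient identity by $W^\top$) are exactly the paper's necessity argument. For the converse, however, the paper's proof (see \Cref{sec:proof}, where the multiclass statement is proved ``completely similarly'' to \Cref{thm:lm}) is \emph{constructive}: given a witness $\nu$ of \eqref{eq:mc-mp2}, it pushes $\nu$ forward under $T(\xv,\yv) = \left(G(\mu)[W^\top G(\mu)]^{-1}W^\top\xv,\ \yv\right)$, which preserves $W^\top\xv$ (hence preserves $\nabla l$) and gives $G(T_{\#}\nu) = G(\mu)[W^\top G(\mu)]^{-1}W^\top G(\nu) = -\tfrac{1-\lambda}{\lambda}G(\mu)$, i.e.\ an exact witness of reachability, with no topological input whatsoever. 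Your route instead needs $\ker\pi\subseteq\operatorname{rec}(\Gfk)$, and the step that converts ``the ray $\{\xv_0\otimes\vv + s\,\mathbf{n}\otimes\vv: s\ge 0\}$ lies in $\Gfk$'' into ``$\mathbf{n}\otimes\vv\in\operatorname{rec}(\Gfk)$'' is valid only when $\Gfk$ is \emph{closed}: for a non-closed convex set, containing a ray (even a full line) does not make its direction a recession direction. This is a genuine gap, because the closedness you invoke rests on the paper's offhand identification $\Gfk = \cc{\{G(\xv,\yv)\}}$ when $\Gamma=\Pcal$, which is heuristic and in general false for unbounded gradient images --- precisely the regime your rays live in. Concretely, if the point-gradient image were $A=\{(t,1):t\in\RR\}\cup\{(0,0)\}$, the set of barycenters of probability measures on $A$ is $\{(x,y):0<y\le 1\}\cup\{(0,0)\}$: it is convex, contains a full line in direction $(1,0)$, yet satisfies $C+(1,0)\not\subseteq C$ (translate $(0,0)$), and it is strictly smaller than $\cc{A}$. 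The paper's own proof never uses that identification; yours hinges on it.

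The gap is also not easily patched inside your framework. The natural fix --- realize the needed translate exactly via a measurable kernel-shift $T_\phi(\xv,\yv)=(\xv+\phi(\xv,\yv),\yv)$ with $\phi$ valued in $\ker W^\top$, giving $G((T_\phi)_{\#}\nu)=G(\nu)+\EE_\nu[\phi\otimes\nabla l]$ --- only produces translates whose row spaces lie in the span of the output-gradients \emph{under $\nu$}, whereas the matrix you must cancel, $(1-\lambda)G(\mu)+\lambda G(\nu)$, has rows involving the output-gradients under $\mu$. Your spanning observation from non-degeneracy (which is correct, and mirrors the paper's use of $[W^\top G(\mu)]^{-1}$) concerns $\mu$, not $\nu$, so it does not close this loop; the construction that handles both terms simultaneously is exactly the paper's projection $G(\mu)[W^\top G(\mu)]^{-1}W^\top$. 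In short: right reduction, correct spanning lemma, but the recession-cone step relies on an unproved (and generally false) closedness claim, and I would replace the whole cylinder argument with the explicit push-forward witness.
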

\vspace{-2em}

Compared to \Cref{thm:lm}, condition \eqref{eq:mc-mp2} is no longer univariate but a square matrix of dimensions the same as $\yv$ (the output). 
Nevertheless, we may simply take the trace on both sides to arrive at an easier albeit only necessary condition. 
We illustrate the last point through a familiar example:
\begin{example}[Cross-entropy] 
Let $\hv = W^\top \xv$. 
The cross-entropy loss corresponds to 
\begin{align}
l(\hv, \yv) &= -\inner{\hv}{\yv} + \log \sum\nolimits_k \exp h_k,\nonumber
\end{align}
where $\yv$ is one-hot. Taking trace on \eqref{eq:mc-mp2} we obtain
\begin{align}
0 &= (1-\lambda) g(\mu) + \lambda g(\nu), \mbox{ where } \nonumber\\
g(\nu) &= \EE_{(\xv,\yv) \sim \nu} \inner{\hv}{\pv-\yv},\nonumber
\end{align}
and $\pv := \softmax(\hv) = \exp(\hv) / \sum_k \exp(h_k)$.
In \Cref{sec:proof} we prove the tight bound
$
   -\Wfk(\tfrac{c-1}{e}) 
   \leq g(\nu) \leq  
   \infty$,
leading to the necessary condition for inducing $W$:
\begin{align}
\label{eq:tau}
\epsilon_d \geq \tau = \tau(c) := \max\{\inner{W}{G(\mu)}/\Wfk(\tfrac{c-1}{e}), 0\}, 
\end{align}
where $c$ is the number of classes. When $c=2$, we recover the sufficient and necessary condition in \eqref{eq:transition}.
\label{exm:ce}
\end{example}

We remark that all of our results continue to hold as necessary (but may not be sufficient) conditions for neural networks where the input $\xv$ goes through a \emph{learned} feature transformation $\varphi(\xv; \uv)$, parameterized by $\uv$: 
\begin{restatable}[Neural Networks]{theorem}{NN}
Consider $\ell(\xv, \yv; W, \uv) = l(\hv, \yv)$ for some loss $l$, where $\hv := W^\top \varphi(\xv; \uv)$. 
Then, 
\begin{align}
\label{eq:nn-gradw}
    \nabla_{W}\ell(\xv, \yv; W, \uv) &= \varphi(\xv;\uv) \otimes \nabla_{\hv} l(\hv, \yv) \\    
\label{eq:nn-gradu}     
    \nabla_{\uv}\ell(\xv, \yv; W, \uv) &= \nabla_{\uv}\varphi(\xv;\uv) W \nabla_{\hv} l(\hv, \yv),
\end{align}
and $(W, \uv)$ is $\lambda$-poisoning reachable iff there exists $\nu \in \Gamma$ such that
\begin{align}
\label{eq:nn-grad}
    \zero \in (1-\lambda) G(\mu) + \lambda G(\nu),    
\end{align}
where $G(\nu) := \EE_{(\xv, \yv)\sim\nu} \left(\nabla_{W}\ell, \nabla_{\uv}\ell \right)$.
In particular, $(W, \uv)$ is $\lambda$-poisoning reachable only if there exists some $\nu\in\Gamma$ such that 
\begin{align}
\label{eq:nn-grad-W}
    \zero \in (1-\lambda) G_1(\mu) + \lambda G_1(\nu),
\end{align}
where $G_1(\nu) := \EE_{(\xv,\yv) \sim \nu} \varphi(\xv; \uv) \otimes \nabla_{\hv} l(\hv, \yv)$.
\label{thm:nn}
\end{restatable}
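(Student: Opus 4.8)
The plan is to reduce the reachability statement to the general characterization in \Cref{thm:main}, after first recording the two gradient formulas by the chain rule; the only genuine computation is differentiating through the feature map, and everything else is a direct specialization. First I would treat the pair $(W,\uv)$ as a single stacked parameter and compute its gradient blockwise. Writing $\hv = W^\top\varphi(\xv;\uv)$ so that $\ell = l(\hv,\yv)$, the $W$-block follows from $\partial h_k/\partial W_{jk} = \varphi_j(\xv;\uv)$, giving the outer product $\nabla_W\ell = \varphi(\xv;\uv)\otimes\nabla_{\hv}l(\hv,\yv)$ in the notation $\av\otimes\bv=\av\bv^\top$, which is \eqref{eq:nn-gradw}. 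For the $\uv$-block I would chain through $\hv$: since $\partial h_k/\partial u_i = \sum_j W_{jk}\,\partial\varphi_j/\partial u_i$, contracting against $\nabla_{\hv}l$ and regrouping gives $\nabla_{\uv}\ell = \nabla_{\uv}\varphi(\xv;\uv)\,W\,\nabla_{\hv}l(\hv,\yv)$, which is \eqref{eq:nn-gradu}. The step needing the most care is getting the order and shapes of these contractions right: because $\varphi$ is vector-valued and $\uv$ may be high-dimensional, $\nabla_{\uv}\varphi$ is a Jacobian that must be contracted with the vector $W\nabla_{\hv}l$ rather than with $\nabla_{\hv}l$ directly.

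With the gradients in hand, I would invoke \Cref{def:mp} and \Cref{thm:main} verbatim, now applied to the stacked parameter $\wv=(W,\uv)$ with stacked gradient $\gv=(\nabla_W\ell,\nabla_{\uv}\ell)$ and $G(\nu):=\EE_{(\xv,\yv)\sim\nu}(\nabla_W\ell,\nabla_{\uv}\ell)$. Reachability then means exactly that the mixed gradient $(1-\lambda)G(\mu)+\lambda G(\nu)$ vanishes for some $\nu\in\Gamma$, \ie $\zero\in(1-\lambda)G(\mu)+\lambda\{G(\nu):\nu\in\Gamma\}$, which is \eqref{eq:nn-grad}. No convexity or structural argument is needed beyond the definition here, since $\varphi$ is held fixed together with $(W,\uv)$ and the poisoning distribution only enters through the expectation defining $G(\nu)$.

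Finally, for the necessary condition \eqref{eq:nn-grad-W} I would simply project the vanishing-gradient identity onto its $W$-block: if $(1-\lambda)G(\mu)+\lambda G(\nu)=\zero$, then in particular its $W$-component vanishes, and that component is precisely $(1-\lambda)G_1(\mu)+\lambda G_1(\nu)$ with $G_1(\nu)=\EE_{(\xv,\yv)\sim\nu}\varphi(\xv;\uv)\otimes\nabla_{\hv}l(\hv,\yv)$. Discarding the $\uv$-block can only relax the constraint, so this condition is necessary but generally not sufficient. I would close by observing that $G_1$ has exactly the outer-product form of the multiclass gradient in \eqref{eq:mc-gradw} with $\xv$ replaced by the learned feature $\varphi(\xv;\uv)$, so the trace-and-threshold reasoning behind \Cref{thm:lmc} and \Cref{exm:ce} carries over to yield a computable necessary threshold for a network's last layer.
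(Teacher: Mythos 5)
Your proposal is correct and follows essentially the same route as the paper: the gradient formulas \eqref{eq:nn-gradw}--\eqref{eq:nn-gradu} are obtained by a direct chain-rule computation, the iff condition \eqref{eq:nn-grad} is a verbatim application of \Cref{thm:main} to the stacked parameter $(W,\uv)$, and the necessary condition \eqref{eq:nn-grad-W} comes from discarding (projecting out) the $\uv$-block of the vanishing-gradient identity. Your closing remark connecting $G_1$ to the multiclass structure of \Cref{thm:lmc} and \Cref{exm:ce} is a nice addition consistent with how the paper uses this result, but it is not part of the proof itself.
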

\section{Gradient Canceling Attack}
\label{sec:gc}
In this section we discuss how to find a poisoning distribution $\nu\in\Gamma$ so that upon retraining on the mixed distribution $\chi = (1-\lambda)\mu + \lambda \nu$, the target parameter $\wv$ will be favored. We recall that $\mu$ is the (clean) training distribution and $\lambda$ is the (absolute) poisoning proportion.  

The algorithm we propose is very intuitive and directly inspired by our \Cref{def:mp}: we simply find a poisoning distribution $\nu\in\Gamma$ so that 
\begin{align}
\gv(\chi) = (1-\lambda) \gv(\mu) + \lambda \gv(\nu) \approx \zero,
\end{align}
where recall that $\lambda = \tfrac{\epsilon_d}{1+\epsilon_d}$ and 
$\gv(\nu) := \EE_{\zv \sim \nu} \nabla_{\wv} \ell(\zv; \wv)$
is the model gradient computed over a distribution $\nu$. Thus, we arrive at the following Gradient Canceling problem\footnote{Other merit functions than the $\ell_2$-norm here can also be used.}: 
\begin{align}
\label{eq:gc}
    \min_{\nu \in \Gamma} ~ \tfrac12\|\gv(\mu) + \epsilon_d \gv(\nu) \|_2^2,
\end{align}
which is always convex (since $\gv(\nu)$ is linear in $\nu$ while $\Gamma$ is a convex subset of admissible distributions). In \Cref{sec:measure} we discuss a measure optimization approach for solving \eqref{eq:gc}, while below we focus on a Lagrangian approach that directly constructs a poisoning dataset and eliminates the need of resampling from $\nu$.

In more details, we constrain the poisoning distribution 
to be uniform over $n\epsilon_d$ data points $\{\zv_j\}$: 
\begin{align}
\label{eq:pdata}
\hat\nu = \frac{1}{n\epsilon_d} \sum_{j=1}^{n\epsilon_d} \delta_{\zv_j},
\end{align}
where $n$ is the size of the (clean) training set and $\delta_{\zv}$ denotes the point mass concentrated on $\zv$. We only optimize the locations of the points $\zv_j$ but keep their mass uniform throughout. 
 
Thus, we arrive at the following problem:
\begin{align}
    \label{eq:loss_function}
    \min_{ \hat\nu\in\Gamma } ~\frac12 \Big\| \gv(\mu) + \epsilon_d \cdot \frac{1}{n\epsilon_d}\sum_{j=1}^{n\epsilon_d} \nabla_{\wv} \ell(\zv_j; \wv) \Big\|_2^2,
\end{align}
where we remind that $\gv(\mu) = \EE_{\zv\sim\mu}\nabla_{\wv} \ell(\zv; \wv)$ as well as the target parameter $\wv$ are fixed during optimization. 
For supervised tasks where $\zv = (\xv, \yv)$, we may choose to optimize both the feature $\xv$ and label $\yv$, or simply optimize the feature $\xv$ only (as in our experiments).

We apply (projected) gradient descent to solve \eqref{eq:loss_function}, where the gradient with respect to the $j$-th poisoning data $\zv_j$ is:
\begin{align}
\label{eq:gc-grad}
    \frac{\partial}{\partial \zv_j} = \frac{1}{n}
    \nabla_{\zv} \nabla_{\wv} \ell(\zv_j; \wv) \cdot [\gv(\mu) + \epsilon_d \gv(\hat\nu)].
\end{align}
We note that using auto-differentiation, the above matrix vector product can be computed very efficiently, costing essentially as much as gradient calculation. The constraint for $\hat\nu$ to lie in $\Gamma$ can be handled by projection. For instance, the constraint $\zv\in\Zds$ (e.g. pixels must lie in $\Zds = [0,1]$) can be enforced by projecting the gradient update onto $\Zds$.

We summarize the Gradient Canceling(GC) attack in \Cref{alg:gc}, and we emphasize that it can take \emph{any} target parameter $\wv$ as ``teacher'' and construct a poisoning dataset such that retraining will arrive (approximately) at $\wv$. 
We note that Gradient Canceling is a refinement of the KKT attack of \citet{KohSL18}: our refinement lies in the generalization to any loss $\ell$, different optimization strategy, exploring target parameters generated by the much stronger GradPC attack \citep{SunZRLL20}, experimenting on a variety of different models, and studying the effect of the poisoning proportion.  
Other authors such as \citet{SuyaMSET21} also explored (rather costly) attacks based on a target parameter in the online setting (that require retraining in each round), whereas their lower bound on the amount of poisoned points may not be easily computed even for logistic regression.

\begin{algorithm}[t]
\DontPrintSemicolon
    \KwIn{training distribution $\mu$,  

    step size $\eta$, 
    poisoning fraction $\epsilon_d$, and target parameter $\wv$.}
    
    initialize poisoned dataset $\hat\nu$ in \eqref{eq:pdata}, \eg, randomly subsample clean training data
    
    calculate $\gv(\mu) = \EE_{\zv \sim \mu} \nabla_{\wv}  \ell(\zv; \wv)$ 
    
    \For{$t =1, 2, ...$}{
    calculate $\gv(\hat\nu) \gets \tfrac{1}{n\epsilon_{d}}\sum_{j=1}^{n\epsilon_{d}} \nabla_{\wv} \ell(\zv_j; \wv) $
    
    calculate loss $\mathcal{L} = \tfrac12\|\gv(\mu) + \epsilon_d \gv(\hat\nu)\|_2^2 $
    
    update poisoned data using \eqref{eq:gc-grad}: $\zv_j \gets \zv_j - \eta \frac{\partial\mathcal{L}}{\partial \zv_j} $ 

    project to admissible set: $\hat\nu \gets \mathrm{Proj}_{\Gamma}(\hat\nu)$
    }
    
\textbf{return} the final poisoned dataset $\hat\nu$
   
\caption{Gradient Canceling(GC) Attack}
\label{alg:gc}
\end{algorithm}

\textbf{Comparison with Gradient Matching.}
\citet{GeipingFHCTMG20} proposed a gradient matching algorithm for crafting \emph{targeted} poisoning attacks, which can be easily adapted to our setting. Suppose 
that a defender aims at minimizing a loss $\ell$ to achieve model $\wv$ on (clean) training distribution $\mu$. Let \frev[\ell] be a reversed version of $\ell$. For example, if $l$ is the cross-entropy loss in \Cref{exm:ce}, then 
\begin{align} 
\!\!\!\frev[l](\hv, \yv) \!&= \!-\!\log[1\!-\!\exp(-l(\hv,\yv))], \mbox{ where } \hv \!=\! W^\top\xv,\!\!
\end{align} 
is the reversed cross-entropy loss \citep{FowlCGGBCG21}. 
As \frev[\ell] discourages the model from classifying clean data $\xv$ as $\yv$, \citet{GeipingFHCTMG20} proposed to match its gradient $\nabla_{\wv} \frev[\ell](\mu, \wv)$ over a poisoned distribution $\hat\nu$ (within some proximity of $\mu$), based on some dissimilarity function $\mathcal{S}$ (\eg, cosine dissimilarity):
\begin{align}
    \min_{\hat\nu \in \Gamma} ~\mathcal{S}\left(\nabla_{\wv} \frev[\ell](\mu; \wv),~ \nabla_{\wv} \ell(\hat\nu; \wv)\right).
\end{align}
We point out some key differences between gradient matching  \citep{FowlCGGBCG21} and our work: (1) Gradient matching focuses  on $\lambda = 1$, \ie, an attacker is able to modify the entire training set. While this is useful in certain settings (\eg, crafting ``unlearnable examples''), it masks the effect of the poisoning proportion, which, as we showed in \Cref{sec:method}, can determine if a target parameter is poisoning reachable at all. (2) Gradient matching requires the construction of a reversed loss, whose gradient may not be at the same scale as that of the loss we are interested in. Thus, one typically can only hope to align the direction of gradients, which does not necessarily imply the desired matching in performance. In contrast, \Cref{alg:gc} only requires the original loss and our theory gives guidance on when it succeeds.
(3) There is no guarantee that after retraining over $\hat\nu$, gradient matching will arrive at the target parameter while  \Cref{alg:gc} explicitly aims to achieve this goal.
Further experimental comparisons against gradient matching will be presented in \Cref{sec:exp} and \Cref{app:add_exp}.

\section{Experiments}
\label{sec:exp}

\begin{table*}[t]
    \centering
    \small
    \caption{The attack accuracy/accuracy drop (\%) on MNIST, CIFAR-10 and TinyImageNet. We perform GC based on the target parameters generated by GradPC. Our attack significantly outperforms TGDA and Gradient Matching.}
    \vspace{-1em}
    \label{tab:results}    
    \setlength\tabcolsep{5pt}
    \scalebox{0.88}{\begin{tabular}{cccc|rrrr|ccr|rrr}
\toprule

\multirow{2}{*}[-.6ex]{Dataset}& Target Model
 & Clean Acc & GradPC & \multicolumn{4}{c|}{\bf Gradient Canceling}   & \multicolumn{3}{c|}{\bf TGDA} & \multicolumn{3}{c}{\bf Gradient Matching}    \\ 
 & $\epsilon_d$ & 0 & 0 & 0.03 & 0.1 & 1 & $\epsilon_d=\tau$  & 0.03 & 0.1 & 1  & 0.03 & 0.1 & 1  \\

\midrule

\multirow{3}{*}{MNIST}
&LR & 92.35 & -70.87 ($\tau$=1.15) & \bf -22.97 & \bf -63.83 & \bf -67.01 & -69.66  &   -2.79 & -4.01 & -8.97 & -3.33  & -8.14 &  -12.13   \\

& NN & 98.04   & -20.03 ($\tau$=2.48) &  \bf -6.10 & \bf -9.77 & \bf -12.05 & -19.05  &  -1.50 & -1.72 & -5.49 & -2.82  &  -3.71 & -\phantom{0}4.03 \\

& CNN & 99.13 & -24.78 ($\tau$=0.98) &  \bf -9.55 & \bf -20.10 & \bf -23.80 &  -23.77  &  -1.11 & -1.31 & -4.76 &  -2.01  & -3.80 & -6.94 \\
\midrule

CIFAR-10 & ResNet-18 & 94.95 & -21.69 ($\tau$=1.29)& \bf -13.73 & \bf -16.40 & \bf -18.33 & -19.98  &  -5.54 &-6.28 & -17.21 & -6.01 & -7.62 & -9.80 \\
\midrule

TinyImageNet & ResNet-34 & 66.65 &   -24.77 ($\tau$=1.08) & \bf -13.22 & \bf -16.11 & \bf -20.15 & -22.79  & -4.42 & -6.52 & -14.33 & -5.53 & -7.72 & -10.85  \\
\bottomrule
\end{tabular}}
\vspace{-5pt}
\end{table*}

\begin{figure*}
    \centering    
    \includegraphics[width=1.0\textwidth]{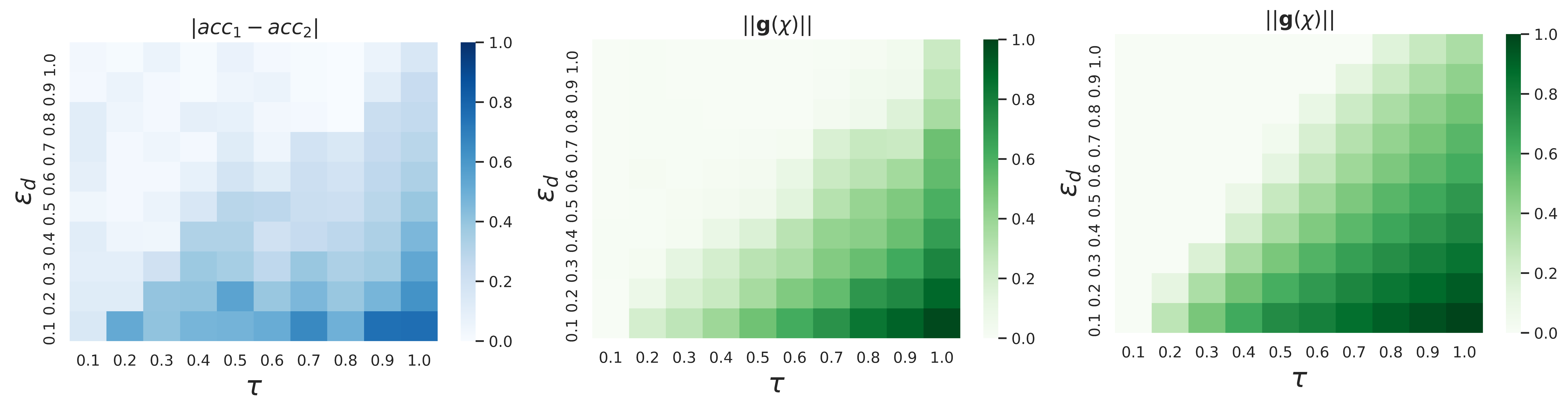}
    \vspace{-2.5em}
    \caption{We run experiments on logistic regression to verify the transition threshold $\tau$ in \Cref{thm:linear}. \textbf{Left}: accuracy difference between GC and GradPC on {10-d  Gaussian dataset}; \textbf{Middle}:  norm of the gradient over the mixed dataset $\chi$ on {10-d  Gaussian dataset}; \textbf{Right}: norm of the gradient over the mixed dataset $\chi$ on MNIST-17.}
    \vspace{-1em}
    \label{fig:heatmap} 
\end{figure*}

We perform extensive experiments to verify our main results: (a) how competitive the gradient canceling attack (\Cref{alg:gc}) is compared to SOTA baselines in indiscriminate data poisoning? (b) to what extent our threshold $\tau$ (see \eqref{eq:tau}) can predict model poisoning reachability?(c) how effective gradient canceling remains against certain existing defense mechanisms?

\subsection{Experimental Settings}

\textbf{Dataset:} We consider image classification on  MNIST~\citep{Deng12} (60k training and 10k test images),  CIFAR-10~\citep{Krishevsky09} (50k training and 10k test images), and TinyImageNet \citep{chrabaszcz2017downsampled} (100k training, 10k validation and 10k test images).
For the first two datasets, we further split the training data into 70\% training set 
and 30\% validation set,
respectively. 

\textbf{Target Models:} 
We examine the following ML models. On MNIST: Logistic Regression (LR), a fully connected neural network (NN) with three layers and a convolutional neural network (CNN) with two convolutional layers, max-pooling and one fully connected layer; On CIFAR-10: ResNet-18 \citep{HeZRS16}; and on TinyImageNet: ResNet-34.

\textbf{Baselines:} We compare to TGDA \citep{LuKY22} and Gradient Matching \citep{GeipingFHCTMG20} attacks. To our knowledge, the TGDA attack is one of the most effective data poisoning attacks against neural networks. Gradient Matching was originally proposed for targeted attacks and unlearnable examples, and we also compare against it due to its similarity with the Gradient Canceling (GC) attack.

\textbf{Implementation:} For GC implementation, we follow \Cref{alg:gc} 
and we discuss the effect of the projection step in \Cref{sec:defense}. Most of our target parameters are generated using GradPC \footnote{We follow the implementation in \url{https://github.com/TobiasLee/ParamCorruption}.}  except LR on MNIST, where we use $\epsilon_w=1$ to allow meaningful accuracy drop and transition threshold $\tau$ \footnote{We discuss the selection of target parameters in \Cref{app:tar}.}. We initialized the poisoned points with a random $n\epsilon_d$ sample from the clean training set and we only optimized the feature vectors but not the labels. Accuracy drops are obtained after retraining.

\textbf{Evaluation Protocol:} To evaluate the effectiveness of different attacks, we first apply each attack to acquire its poisoned set and then retrain the model from scratch (initialized with the same random seed across all attacks) on both clean and poisoned data until convergence. The (test) accuracy drop, compared with clean accuracy (obtained by training on clean data only), is reported across all experiments.

\begin{figure*}[t]
    \centering
    \includegraphics[width=\textwidth]{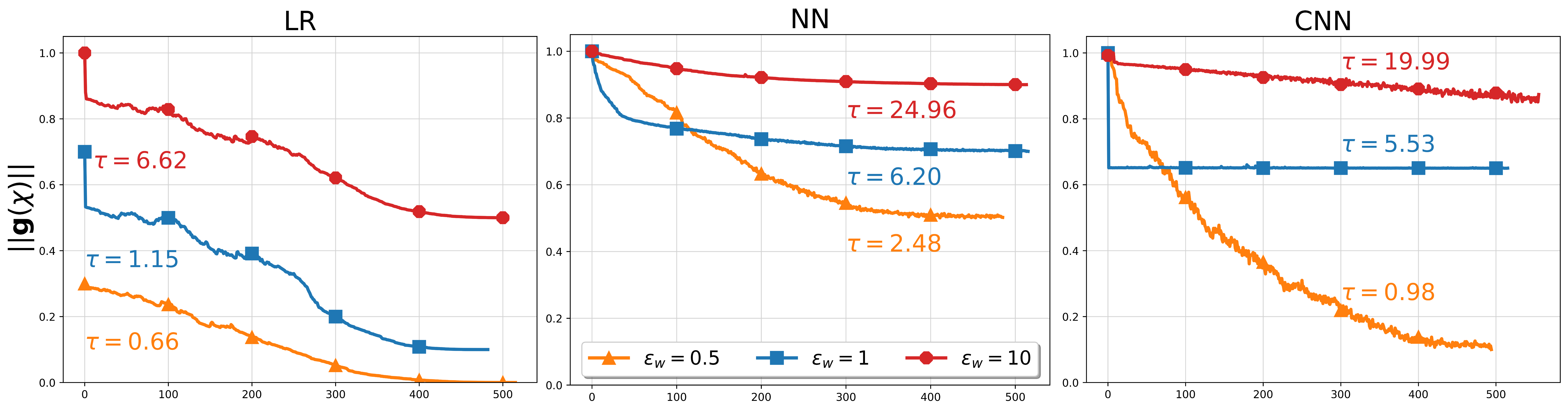}
    \vspace{-1.8em}
    \caption{The learning curve for running GC on MNIST with different target models $\wv$ and $\epsilon_w$. We fix $\epsilon_d=1$, and the curves indicate the decrease of the gradient $\|\gv(\chi)\|$ \wrt GC epoch. We  confirm that GC fails to achieve $\wv$ when $\epsilon_d < \tau$.}
    \vspace{-.8em}
    \label{fig:epsilon_w}
\end{figure*}

\subsection{How Competitive Is Gradient Canceling (GC)?}

\Cref{tab:results} reports the accuracy drop of LR, NN, CNN and ResNet due to GC on the aforementioned datasets. We note the trade-off of $\epsilon_w$ in GradPC when generating a target parameter $\wv$: the larger $\epsilon_w$ is, the more effective GradPC is but also the larger the resulting transition threshold $\tau$ is, meaning that GC (or any other data poisoning attack) can succeed (in reproducing $\wv$) only with a larger proportion $\epsilon_d$ of poisoned points. 
We used $\tau = \tau(2)$ in \Cref{tab:results} as we find it is much more indicative than the more conservative $\tau(c)$ (which is roughly 11 times smaller on TinyImageNet and 4 times smaller otherwise).

We observe that GC is much more effective than TGDA and Gradient Matching, across all datasets, models, and choices of $\epsilon_d$. This confirms that existing data poisoning attacks are under-optimized and there is room for future improvements. Moreover, when $\epsilon_d$ approaches the transition threshold $\tau$, GC, a \emph{bona fide} data poisoning attack, indeed achieves a comparable accuracy drop as GradPC (which directly overwrites the model). 
While \Cref{tab:comparison} still has room to improve, both in terms of the tightness of  $\tau$ and the effectiveness of GC, we believe our results yield significant insights on indiscriminate data poisoning, in particular the theoretical and experimental quantification of the detrimental effect of a large proportion $\epsilon_d$ of poisoned points.

\subsection{Predicting Poisoning Reachability Using $\tau$}

Next, we further examine the predictability of the transition threshold $\tau \approx \max\{3.6 \inner{W}{G(\mu)}, 0\}$, 
whose main term is simply proportional to the inner product between a target parameter and its gradient on the clean training data. 

\textbf{Binary Logistic Regression}: We have already shown the predictability of $\tau$ on the OR dataset in \Cref{fig:heatmap_or}. 
In \Cref{fig:heatmap} we show additional results on a 10-dimensional Gaussian dataset (see \Cref{sec:exp-add}) and MNIST-17 (consisting only of digits 1 and 7). 
The observations are similar: GC could achieve similar accuracy drops as GradPC (which directly overwrites the model), as long as $\epsilon_d$ crosses the threshold $\tau$.
We note that the threshold $\tau$ tends to be more conservative as the dimension of the problem increases, which we believe is largely because the optimization cost of GC becomes accordingly higher, making convergence harder to attain.

\textbf{Multi-class with Cross-Entropy:} 
We also perform experiments on multi-class problems with the cross-entropy loss in \Cref{exm:ce}. In \Cref{tab:results} we have confirmed that when $\epsilon_d > \tau$, GC largely achieves the target parameters generated by GradPC. We now further examine the opposite case where $\epsilon_d < \tau$. We fix $\epsilon_d =1$ and vary $\epsilon_w$ in GradPC, consequently generating target parameters with varying $\tau$ on MNIST. \Cref{fig:epsilon_w} shows how much the gradient $\|\gv(\chi)\|$ of the target parameters decreases \wrt each epoch of GC (when $\chi$, the mixed dataset, gets updated). We observe that the gradients do not converge to 0, indicating that GC failed to produce the target parameters. The failure of GC indicates that a larger poisoning proportion $\epsilon_d$ may be necessary to produce the target parameters, as confirmed by our theory.

\begin{table}

\addtolength{\tabcolsep}{-3pt}
    \centering
    \caption{Accuracy drop (\%) of Gradient Canceling (w/wo clipping) on MNIST against Sever defense (+ indicates the accuracy increased by the defense). GC-c: GC with clipped output; GC-d: GC after defense; GC-cd: GC-c after defense.}
    \vspace{-1em}
    \label{tab:defense}    
    \setlength\tabcolsep{5pt}
    \scalebox{0.77}{\begin{tabular}{crlrrrrrr}
\toprule

\multirow{2}{*}[-.6ex]{\bf Model} & \multirow{2}{*}[-.6ex]{Clean}& \multirow{2}{*}[-.6ex]{$\epsilon_d$} &\multirow{2}{*}[-.6ex]{GC} & \multirow{2}{*}[-.6ex]{GC-c}& \multicolumn{2}{c}{Sever}  \\
 
\cmidrule(l{2pt}r{0pt}){6-7}
& & & & & GC-d  & GC-cd \\

\midrule
\multirow{3}{*}[-.6ex]{LR} & \multirow{3}{*}[-.6ex]{92.35} & 0.03 &-22.79 & -11.28 & -12.81 / \bf +9.98 & -9.66 / \bf +1.62  \\

& & 0.1 & -63.83 & -26.77 & -59.79 / \bf +4.04 & -25.53 / \bf +1.24\\

& & 1 & -67.01 & -28.99 & -65.01 / \bf +2.00 & -27.89 / \bf +1.10  \\

\midrule

\multirow{3}{*}[-.6ex]{NN} & \multirow{3}{*}[-.6ex]{98.04} & 0.03 & -6.10 & -3.25 & -3.22 / \bf +2.88 & -2.26 / \bf +0.90   \\
& & 0.1 & -9.77 & -5.10  & -7.66 / \bf +2.11 & -4.46 / \bf +0.56\\
& & 1 & -12.05 & -6.53 & -10.02 / \bf +2.03 & -6.11 / \bf +0.42\\
\midrule

\multirow{3}{*}[-.6ex]{CNN} & \multirow{3}{*}[-.6ex]{99.13}  & 0.03 & -9.55 & -5.87  & -5.55 / \bf +4.00 & -4.36 / \bf +1.51  \\
& & 0.1 & -20.10 & -12.50 & -16.55 / \bf +3.55 & -11.32 / \bf +1.18 \\
& & 1 & -23.80 & -13.32  & -21.05 / \bf +2.75 & -12.51 / \bf +0.81 \\

\bottomrule
\end{tabular}}
\vspace{-15pt}
\end{table}

\vspace{-.3em}
\subsection{Does GC Remain Effective Against Defenses?}
\label{sec:defense}
Lastly, we choose several defenses from \citep{AngleEAZPLKTPS22} and examine the effectiveness of GC against (1) a distribution-wise certified defense Sever~\citep{DiakonikolasKKLSS19}, which removes $\epsilon_d$ training points with the highest outlier scores, defined using the top singular value of the gradient matrix, and (2) one of the SOTA pointwise certified defenses \citep{LevineF21,WangLF22a,WangLF22b}    called Deep Partition Aggregation (DPA) \citep{LevineF21}, which provides certified robustness for individual test samples. More results \wrt other defenses (e.g., influence defense and max-up defense) can be found in \Cref{sec:more-defenses}. 

\begin{table}

\addtolength{\tabcolsep}{-3pt}
    \centering
    \caption{The Certified Accuracy (CA) (\%) of DPA and Accuracy drop (\%) of Gradient Canceling ($\epsilon_d=0.008$) on MNIST against the DPA defense (+ indicates the accuracy increased by the defense).}
    \vspace{-.3em}
    \label{tab:defense_dpa}    
    \setlength\tabcolsep{5pt}
    \scalebox{0.8}{\begin{tabular}{ccccccc}
\toprule

\bf Model & Clean & $k$ & Clean (DPA) & GC & CA & DPA  \\

\midrule
\multirow{2}{*}[-.6ex]{LR} & \multirow{2}{*}[-.6ex]{92.35} & 1200 & 91.33 &-8.25 & 47.12 & -4.68/\bf +3.57   \\

& & 3000& 89.97 & -8.25 & 49.23 & -4.21/\bf +4.04 \\

\midrule

\multirow{2}{*}[-.6ex]{NN} & \multirow{2}{*}[-.6ex]{98.04} & 1200 & 94.65 & -2.25 & 46.11 & -1.29/\bf +0.96 \\
& & 3000 & 92.37 & -2.25 & 48.52 & -1.17/\bf+1.08  \\
\midrule

\multirow{2}{*}[-.6ex]{CNN} & \multirow{2}{*}[-.6ex]{99.13}  & 1200 & 95.53 & -2.77 & 47.22 & -1.66/\bf+1.11   \\
& & 3000 & 93.15 & -2.77 & 50.01 & -1.52/\bf+1.25\\

\bottomrule
\end{tabular}}
\vspace{-.8em}
\end{table}

\textbf{Results on Sever:}
\Cref{tab:defense} reports the accuracy drops on MNIST. We observe that 
(1) Sever indeed reduces the effectiveness of GC, consistently across all models.
(2) Clipping poisoned data to the range of the clean training set makes GC more robust against all defenses, at the cost of less effectiveness in terms of accuracy drop.
(3) Even with clipping and against defenses, GC still largely outperforms TGDA and Gradient Matching.
(4) Larger $\epsilon_d$ generally makes GC both more effective and more robust, which matches our observation in least-squares regression (see \Cref{sec:ls}).

\textbf{Results on DPA}: although DPA is originally proposed for pointwise robustness, it can be easily applied to the indiscriminate data poisoning setting. Here we choose $k=1200/3000$ for DPA and fix $\epsilon_d=0.008$ to roughly preserve median certified robustness on MNIST. Note that we choose the base classifiers to be the same as the target models. We report the certified accuracy (CA), which is the percentage of certified robust examples among the test set, and (relative) accuracy increase due to deploying DPA in \Cref{tab:defense_dpa}. We observe that DPA is generally effective against GC, where the (relative) accuracy increased by DPA roughly approaches its certified accuracy. For example, on LR with $k=3000$, GC was able to decrease test accuracy by $8.25\%$, whereas with the DPA defense, $4.04\%$ (relative) accuracy drop of GC are rectified, leading to an effectiveness that is roughly proportional to its certified accuracy, \ie, $4.04 \approx 8.25 * 0.4923$.

\section{Conclusion and Future Work}
\label{sec:con}

In this work, we introduce the notion of \emph{model poisoning reachability} as a technical tool to study the intrinsic limits in model-targeted data poisoning. We give complete characterizations on the poisoning ratio that any data poisoning attack has to satisfy (in order to induce a given target parameter), and we derive an easily computable threshold that is readily applicable and gives guidance on crafting effective model-targeted attacks. 
Using the gradient canceling attack, we perform extensive experiments on a number of datasets and models to quantify the critical role played by the poisoning ratio, confirm the precision of our transition threshold, and achieve better performance against existing baselines (w/wo several existing defenses). 
Our empirical results also reveal further room to sharpen the transition threshold and develop more effective data poisoning attacks, and we mention the exciting possibility of designing (clean) in-house data to mitigate and regulate the risk of future poisoning attacks. 

One limitation of this work is its focus on achieving specific target parameters, which may not always be available or necessary. Indeed, data poisoning attacks that are not based on any target parameter abound. However, we point out that our work may still be valuable for the latter class of attacks, for instance, as a distillation device: a data poisoning attack can use our threshold to evaluate the potential ``wastefulness'' of its constructed poisoning set (along with the model parameter obtained by retraining) and then use GC to further distill and improve it. 
Another limitation is that most existing data poisoning attacks, including GC, assume a lot of knowledge of the victim model (\eg, fixed architecture, access to clean training data, \etc) and hence may not always be realistic. 
Advanced and adaptive defense mechanisms may also thwart the effectiveness of many attacks (including GC). Further investigations of these issues form another important direction for future research.

\vspace{1mm}
\subsubsection*{Acknowledgments}
We thank the reviewers for the critical comments that have largely improved the presentation and precision of this paper.
We gratefully acknowledge funding support from NSERC and the Canada CIFAR AI Chairs program. 
Resources used in preparing this research were provided, in part, by the Province of Ontario, the Government of Canada through CIFAR, and companies sponsoring the Vector Institute.

\printbibliography[segment=0]

\newpage
\appendix
\onecolumn
\newrefsegment
\numberwithin{equation}{section}
\section{Proofs}
\label{sec:proof}

\begin{restatable}[Linear Models]{theorem}{LinearModel}
Consider $\ell((\xv, y); \wv) = l(\wv^\top \xv, y)$ for some univariate loss
$l$. Then, 
\begin{align}
    \gv(\xv, y) = \xv \cdot l'(\wv^\top \xv, y), \nonumber
\end{align}
and $\wv$ is $\lambda$-poisoning reachable iff there exists $\nu\in\Gamma$ such that 
\begin{align}
    0 \in (1-\lambda)\gv(\mu) + \lambda \gv(\nu).\nonumber
\end{align}
Suppose $\inner{\wv}{\gv(\mu)} \ne 0$. Consider $\Pi \subseteq \Pcal$ and  
let 
\begin{align}
    \!\! \gfk := \{ \EE_{(\xv, y) \sim \nu} (\wv^\top\xv) \cdot l'(\wv^\top\xv, y): \nu \in \Pi \} \subseteq \RR.\nonumber
\end{align}
Then, $\wv$ is $\lambda$-poisoning reachable
if \footnote{$T_{\#}\nu$ denotes the distribution of $T(\zv)$ when $\zv \sim\nu$.} $\Gamma \supseteq T_{\#}\Pi$ and 
\begin{align}
\label{eq:mp1}
    0 \in (1-\lambda) \inner{\wv}{\gv(\mu)} + \lambda \gfk, 
\end{align}
where the transformation $T(\xv, y) := \left(\tfrac{\inner{\wv}{\xv}}{\inner{\wv}{\gv(\mu)}} \gv(\mu), y \right)$.
Conversely, \eqref{eq:mp1} holds if $\wv$ is $\lambda$-poisoning reachable and $\Pi \supseteq \Gamma$.
\label{thm:lm}
\end{restatable}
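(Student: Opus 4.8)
The plan is to first dispatch the gradient formula and the easy equivalence, then reduce the $d$-dimensional vanishing-gradient condition to the scalar condition \eqref{eq:mp1} by exploiting the one-dimensional structure carried by $T$. First I would compute, by the chain rule, that $\gv(\xv,y) = \xv \cdot l'(\wv^\top\xv, y)$, after which the opening equivalence---that $\wv$ is $\lambda$-poisoning reachable iff $\zero \in (1-\lambda)\gv(\mu) + \lambda\gv(\nu)$ for some $\nu \in \Gamma$---is immediate from \Cref{thm:main} with this gradient.

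The heart of the argument is a single structural observation about $T$: it is \emph{margin-preserving}. Writing $s = \wv^\top\xv$, the transformed feature $\tilde\xv = \tfrac{s}{\inner{\wv}{\gv(\mu)}}\gv(\mu)$ satisfies $\wv^\top\tilde\xv = s$, so $l'(\wv^\top\tilde\xv, y) = l'(s,y)$ and the gradient at the transformed point collapses onto the line spanned by $\gv(\mu)$:
\begin{align}
\gv(T(\xv,y)) = \tfrac{\inner{\wv}{\gv(\xv,y)}}{\inner{\wv}{\gv(\mu)}}\,\gv(\mu).\nonumber
\end{align}
Averaging over a pushforward $\nu = T_{\#}\rho$ and using $\EE_{\rho}\inner{\wv}{\gv(\xv,y)} = \EE_{\rho}(\wv^\top\xv)l'(\wv^\top\xv,y) =: g_\rho$, which is exactly an element of $\gfk$ whenever $\rho \in \Pi$, I obtain $\gv(\nu) = \tfrac{g_\rho}{\inner{\wv}{\gv(\mu)}}\gv(\mu)$.

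For the sufficiency direction I would assume $\Gamma \supseteq T_{\#}\Pi$ together with \eqref{eq:mp1}, pick $\rho \in \Pi$ realizing the membership so that $(1-\lambda)\inner{\wv}{\gv(\mu)} + \lambda g_\rho = 0$, and set $\nu = T_{\#}\rho \in \Gamma$. Substituting the collapsed $\gv(\nu)$ into $(1-\lambda)\gv(\mu) + \lambda\gv(\nu)$ factors out $\gv(\mu)$ with scalar coefficient $(1-\lambda) + \lambda g_\rho/\inner{\wv}{\gv(\mu)}$, which vanishes by construction, so the mixture gradient is $\zero$. For the converse, given a witness $\nu \in \Gamma \subseteq \Pi$ with vanishing mixture gradient, I pair it with $\wv$: the inner product gives $(1-\lambda)\inner{\wv}{\gv(\mu)} + \lambda\inner{\wv}{\gv(\nu)} = 0$, and since $\inner{\wv}{\gv(\nu)} = \EE_{\nu}(\wv^\top\xv)l'(\wv^\top\xv,y) \in \gfk$, this is precisely \eqref{eq:mp1}.

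The main obstacle---really the one idea that drives everything---is recognizing the margin-preserving property of $T$, which forces $\gv(T(\xv,y))$ onto the single direction $\gv(\mu)$ and thereby converts a vector condition into the scalar condition \eqref{eq:mp1}. The hypothesis $\inner{\wv}{\gv(\mu)} \ne 0$ earns its keep twice: it makes $T$ well-defined, and it guarantees $\gv(\mu) \ne \zero$, so that a scalar multiple of $\gv(\mu)$ vanishes exactly when the scalar does. The remaining steps---checking that $T_{\#}\rho$ is admissible and the change-of-variables in the expectation---are routine measure-theoretic bookkeeping.
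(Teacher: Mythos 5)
Your proposal is correct and follows essentially the same route as the paper's proof: the same margin-preserving transformation $T$ collapsing gradients onto the line spanned by $\gv(\mu)$ for sufficiency, and the same inner-product-with-$\wv$ argument for the converse. Your explicit statement of the collapse identity $\gv(T(\xv,y)) = \tfrac{\inner{\wv}{\gv(\xv,y)}}{\inner{\wv}{\gv(\mu)}}\,\gv(\mu)$ is only a more transparent packaging of the computation the paper performs inline.
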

\begin{proof}
The gradient computation is straightforward while the first claim follows from \Cref{thm:main}. 

Suppose now $\inner{\wv}{\gv(\mu)} \ne \zero$.

Suppose first \eqref{eq:mp1} holds, so we can choose $\nu \in \Pi$ such that 
\begin{align}
    0 = (1-\lambda) \inner{\wv}{\gv(\mu)} + \lambda \EE_{(\xv,y) \sim \nu} (\wv^\top\xv) \cdot l'(\wv^\top\xv , y).
\end{align} 
Consider the transformation $T(\xv, y) = \left(\tfrac{\inner{\wv}{\xv}}{\inner{\wv}{\gv(\mu)}} \gv(\mu), y\right)$ and let $\tilde\nu = T_{\#}\nu$, which is in $\Gamma$ due to our assumption $\Gamma \supseteq T_{\#}\Pi$. We then have 
\begin{align}
\EE_{(\tilde\xv, \tilde y) \sim \tilde\nu} l'(\wv^\top\tilde\xv, \tilde y) \tilde\xv = \EE_{(\xv, y) \sim \nu} l'(\wv^\top\xv, y) \tfrac{\inner{\wv}{\xv}}{\inner{\wv}{\gv(\mu)}} \gv(\mu),
\end{align}
and hence 
\begin{align}
(1-\lambda) \gv(\mu) + \lambda \EE_{(\tilde\xv,\tilde y) \sim \tilde\nu} \nabla \ell((\tilde\xv, \tilde y); \wv) 
&= 
[(1-\lambda) \inner{\wv}{\gv(\mu)} + \lambda \EE_{(\xv,y) \sim \nu} l'(\wv^\top\xv, y) \inner{\wv}{\xv}] \tfrac{\gv(\mu)}{\inner{\wv}{\gv(\mu)}}
\\
&= \zero,
\end{align}
thanks to our choice of $\nu$. Applying \Cref{thm:main} again we know $\wv$ is $\lambda$-poisoning reachable. 

Conversely, if $\wv$ is $\lambda$-poisoning reachable, then from \Cref{thm:main} it follows that 
\begin{align}
    \zero \in (1-\lambda) \gv(\mu) + \lambda \EE_{(\xv,y)\sim\nu} l'(\wv^\top\xv , y) \xv.
\end{align}
Taking inner product with the model $\wv$ on both sides and noting that $\Gamma \subseteq \Pi$ we verify \eqref{eq:mp1}.
\end{proof}
\begin{remark}
The condition $\inner{\wv}{\gv(\mu)} \ne 0$ can be easily checked \emph{a priori}. In case it fails, two possibilities arise: 
\begin{itemize}
    \item $\gv(\mu) = \zero$, in which case poisoning is trivial: simply let $\nu = \mu$ for any $\lambda$.
    \item $\gv(\mu) \ne \zero$, in which case we may let $\nu$ concentrate on the line $L := \{\alpha \gv(\mu): \alpha \in \RR\}$. Thus, data poisoning succeeds if 
    \begin{align}
        0 = (1-\lambda) + \lambda \EE_{(\alpha, y)\sim \nu} l'(0, y) \alpha,
    \end{align}
    where we identify $\alpha \gv(\mu)$ as $\alpha$ for $\nu$. As long as $\Gamma$ contains some distribution that puts nonzero mass on $L$ and sufficiently large $l'(0,y)$, $\wv$ is again $\lambda$-poisoning reachable.
\end{itemize}
\label{rem:deg}
\end{remark}

Once we identify an appropriate subset $\Pi$ of poisoning distributions, we can even estimate the interval $\gfk$ using Monte Carlo algorithms.
Moreover, we may restrict the search of a poisoning distribution to the potentially much smaller subset $T_{\#}\Pi$ (where $\xv$ lies on the line spanned by $\gv(\mu)$). 

\begin{remark}[Connection to breakdown point] 
For simplicity consider $\Zds = \RR^d \times \RR$. 
It is well-known that unbounded convex losses $(t,y) \mapsto l(t-y)$, such as the square loss in \Cref{exm:ls}, have 0 breakdown point (and hence not robust): even adding a single poisoning point can perturb the model norm $\|\wv\|$ unboundedly \citep[e.g.][Theorem 5]{YuAS12}. 
\Cref{thm:lr} gives a much more detailed characterization: In fact, any target model $\wv$ can be induced by a diminishing amount of poisoning (even if $\nu$ is supported on a single point)! Indeed, since $l$ is unbounded and convex, there exists some $\tau\in\RR$ such that $| l'(\tau)| \ne 0$. It follows then $a = -\infty$ and $b = \infty$, and hence the threshold in \eqref{eq:linearthreshold} is trivially 0, for any target model $\wv$. 
Of course, our characterization in \Cref{thm:lr} continues to hold for any domain $\Zds$, unbounded or not.
\label{rem:breakdown}
\end{remark}

\begin{example}[Dichotomy]
    Consider the smooth loss\footnote{This is essentially a smoothed version of the perceptron loss $l(t) = \max\{-t, 0\}$.} 
    \begin{align}
    \label{eq:dich}
     l(t) = \begin{cases}
        -(4t+1)\exp(-2), & \mbox{ if } t \leq -\tfrac12 \\
        \exp(\tfrac1t), & \mbox{ if } t \in [-\tfrac12,0] \\
        0, & \mbox{ if } t \geq 0
        \end{cases}
        . 
    \end{align}
    Clearly, we have $a = 0$ and $b=\infty$. Thus, we arrive at a remarkable dichotomy: 
    \begin{itemize}
    \item If $\inner{\wv}{\gv(\mu)} = 0$ (in particular any separating $\wv$), then data poisoning succeeds with any $\lambda > 0$; 
    \item If $\inner{\wv}{\gv(\mu)} \ne 0$ (and hence $\wv$ cannot separate $\mu$), then data poisoning fails with any $\lambda < 1$.
    \end{itemize}
\end{example}
Note that $l$ in \eqref{eq:dich} is not calibrated since $l'(0) = 0$ \citep{BartlettJM06}, so it may not be a sensible loss to use in practice. For a calibrated margin loss $l$, \ie, one that is differentiable at 0 with $l'(0) < 0$, we necessarily have $ b > 0$ and $a < 0$, so the threshold in \eqref{eq:linearthreshold} usually lies strictly in $(0,1)$, incurring a nontrivial phase transitioning.

\begin{restatable}[Multiclass]{theorem}{LMC}
Consider $\ell(\xv, \yv; W) = l(W^\top\xv, \yv)$ for some loss $l$. 
Then\footnote{We use the notation $\av \otimes \bv := \av \bv^\top$ for two column vectors.}, 
\begin{align}
\label{eq:mc-gradw-app}
    G(\xv, \yv) := \nabla_{W}\ell(\xv, \yv; W) &= \xv \otimes \nabla l(W^\top\xv, \yv),
\end{align}
and $W$ is $\lambda$-poisoning reachable iff there exists $\nu \in \Gamma$ such that
\begin{align}
\label{eq:mc-grad-app}
    \zero \in (1-\lambda) G(\mu) + \lambda G(\nu).
\end{align}
Suppose $W^\top G(\mu)$ is non-degenerate and let 
\begin{align}
    \gfk := \{ \EE_{(\xv, \yv) \sim \nu} (W^\top\xv) \otimes \nabla l(W^\top\xv, \yv): \nu \in \Pi \}. \nonumber
\end{align}
Then, $W$ is $\lambda$-poisoning reachable
if $\Gamma \supseteq T_{\#}\Pi$ and 
\begin{align}
\label{eq:mc-mp}
    \zero \in (1-\lambda) W^\top G(\mu) + \lambda \gfk, 
\end{align}
where the transformation $T(\xv, \yv) := \left(G(\mu) [W^\top G(\mu)]^{-1} W^\top\xv , \yv \right)$.
Conversely, \eqref{eq:mc-mp} holds if $W$ is $\lambda$-poisoning reachable and $\Pi \supseteq \Gamma$.
\end{restatable}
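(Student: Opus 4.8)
The plan is to mirror the scalar-output argument for \Cref{thm:lm}, upgrading the inner product with $\wv$ to left-multiplication by $W^\top$ and the scalar rescaling to multiplication by a suitable $d\times c$ matrix. The gradient formula \eqref{eq:mc-gradw-app} is a one-line chain-rule computation, since $\nabla_W l(W^\top\xv,\yv)=\xv\,(\nabla l(W^\top\xv,\yv))^\top$, and the first ``iff'' \eqref{eq:mc-grad-app} is simply \Cref{thm:main} instantiated at the parameter $W$. The substance is therefore the equivalence between reachability and the reduced condition \eqref{eq:mc-mp} under the non-degeneracy of $W^\top G(\mu)$.

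The central object is the matrix $A := G(\mu)\,[W^\top G(\mu)]^{-1}$, well-defined precisely because $W^\top G(\mu)$ is invertible; the transformation is $T(\xv,\yv)=(A W^\top\xv,\yv)$. Two algebraic identities drive the argument and I would establish them first: $W^\top A=I$, so that $T$ preserves the logits, $W^\top(A W^\top\xv)=W^\top\xv$, and hence leaves the nonlinear factor $\nabla l(W^\top\xv,\yv)$ untouched; and $G(\mu)=A\,W^\top G(\mu)$, which says $G(\mu)$ lies in the range onto which $A$ maps. These are the matrix analogues of the scalar rescaling $\tfrac{\inner{\wv}{\xv}}{\inner{\wv}{\gv(\mu)}}\gv(\mu)$.

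For the forward implication I assume \eqref{eq:mc-mp} and choose $\nu\in\Pi$ witnessing $\zero=(1-\lambda)W^\top G(\mu)+\lambda\,\EE_{(\xv,\yv)\sim\nu} (W^\top\xv)\otimes\nabla l(W^\top\xv,\yv)$. Put $\tilde\nu=T_{\#}\nu\in\Gamma$, which is admissible since $\Gamma\supseteq T_{\#}\Pi$. Logit preservation plus the factorization $(A W^\top\xv)\otimes\nabla l = A\big[(W^\top\xv)\otimes\nabla l\big]$ gives $G(\tilde\nu)=A\,\EE_\nu (W^\top\xv)\otimes\nabla l$; combining this with $G(\mu)=A W^\top G(\mu)$ lets me factor $A$ out on the left of $(1-\lambda)G(\mu)+\lambda G(\tilde\nu)$ to obtain $A\big[(1-\lambda)W^\top G(\mu)+\lambda\,\EE_\nu(W^\top\xv)\otimes\nabla l\big]=A\cdot\zero=\zero$, so \Cref{thm:main} yields reachability. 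For the converse I take a witnessing $\nu\in\Gamma$ with $\zero\in(1-\lambda)G(\mu)+\lambda\,\EE_\nu \xv\otimes\nabla l$, left-multiply by $W^\top$, and observe that $\nu\in\Gamma\subseteq\Pi$ places the resulting term in $\gfk$, which is exactly \eqref{eq:mc-mp}.

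The main obstacle is pinning down $T$ so that it simultaneously keeps $W^\top\xv$ (hence $\nabla l$) invariant and steers the gradient mass back onto $G(\mu)$; once the two identities for $A$ are in hand, the rest is the same factor-out-and-cancel bookkeeping as in the scalar case. Concretely this amounts to recognizing $A=G(\mu)[W^\top G(\mu)]^{-1}$ as the unique solution of $A W^\top G(\mu)=G(\mu)$ and checking that it also satisfies $W^\top A=I$ --- the only place where the matrix (rather than scalar) structure, and the non-degeneracy hypothesis, genuinely intervene.
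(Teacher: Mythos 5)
Your proposal is correct and follows exactly the route the paper intends: the paper's own proof of this theorem is just the remark that it is ``completely similar'' to that of \Cref{thm:lm}, and your argument is precisely that scalar proof upgraded to matrices --- the identities $W^\top A = I$ and $A\,W^\top G(\mu) = G(\mu)$ for $A = G(\mu)[W^\top G(\mu)]^{-1}$ play the role of the scalar rescaling $\tfrac{\inner{\wv}{\xv}}{\inner{\wv}{\gv(\mu)}}\gv(\mu)$, with logit preservation, the factor-out-of-$A$ cancellation, and the left-multiplication by $W^\top$ for the converse all matching the paper's scalar argument step for step. In fact, you have supplied the details the paper omits.
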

\begin{proof}
The proof is completely similar to that of \Cref{thm:lm}.
\end{proof}

\begin{proof}
{[of \Cref{exm:ce}]} 
We aim to show that for any $\hv\in\RR^c$ and one-hot $\yv\in\RR^c$, we have 
\begin{align}
    -\Wfk(\tfrac{c-1}{e}) \leq \inner{\hv}{\pv - \yv} \leq \infty, \mbox{ where recall that } \pv := \softmax(\hv) = \exp(\hv) / \sum_k \exp(h_k).
\end{align}
The right-hand side is clear: we need only send some $h_k$ to $\infty$, as long as $y_k \ne 1$. For the left-hand side, we simplify as follows. W.l.o.g. assume $y_i = 1$. Then, 
\begin{align}
\inner{\hv}{\pv - \yv} = \sum_k h_k \left[\frac{\exp(h_k)}{\sum_j \exp(h_j)} - y_k\right] 
&= \frac{\sum_{k} (h_k - h_i) \exp(h_k-h_i)}{1+ \sum_{j\ne i} \exp(h_j - h_i)}  
\\
&= 
\sum_{k\ne i} \frac{\tfrac{1}{c-1} + \exp(h_k-h_i) }{1+ \sum_{j\ne i} \exp(h_j - h_i)} \cdot \frac{(h_k - h_i) \exp(h_k-h_i)}{\tfrac{1}{c-1} + \exp(h_k-h_i)} \\
\label{eq:LambertW}
&\geq \inf_t \frac{t \exp(t)}{\tfrac{1}{c-1} + \exp(t)} \\
&= -\Wfk(\tfrac{c-1}{e}),
\end{align}
where the inequality is achieved when $t \equiv h_k - h_i$ minimizes \eqref{eq:LambertW}.
\end{proof}

\NN*

\begin{proof}
It is straightforward to compute the gradients in \eqref{eq:nn-gradw} and \eqref{eq:nn-gradu}. The iff condition in \eqref{eq:nn-grad} then follows from \Cref{thm:main}. 
The necessary condition in \eqref{eq:nn-grad-W} is obtained by simply ignoring the second part of $G(\mu)$ (that corresponds to $\nabla_{\uv} \ell$). 
\end{proof}
From \eqref{eq:nn-grad-W} we conclude that the poisoning distribution $\nu$ must be supported at least on $s = \rank(G_1(\mu))$ points, as long as $\lambda \in (0,1)$.
Taking inner product \wrt $WA$ on both sides of \eqref{eq:nn-grad-W} we obtain
\begin{align}
\label{eq:nn-grad-uni}
    0 = (1-\lambda) g_A(\mu) + \lambda g_A(\nu),
\end{align}
where $g_A(\nu) = \EE_{(\xv, \yv) \sim \nu} \inner{A\nabla_{\hv} l(\hv, \yv)}{\hv}$ and $A$ is arbitrary. The condition \eqref{eq:nn-grad-uni} is univariate and easy to check, albeit being necessary but not sufficient. We remark that the free choice of the matrix $A$ may be exploited to tighten this necessary condition.

\section{Data poisoning as measure optimization}
\label{sec:measure}
In this section we discuss a measure optimization approach for solving the gradient canceling problem: 
\begin{align}
\label{eq:gc-app}
    \min_{\nu \in \Gamma} ~ \tfrac12\|\gv(\mu) + \epsilon_d \gv(\nu) \|_2^2,
\end{align}
where we recall that 
\begin{align}
\gv(\nu) = \EE_{\zv \sim \nu} \nabla_{\wv} \ell(\zv; \wv)
\end{align}
is the model gradient computed over the distribution $\nu$. 
The objective of \eqref{eq:gc-app} is a convex quadratic, although living in an infinite dimensional space (the vector space of all signed measures over $\Zds$). A particularly suitable way to solve \eqref{eq:gc-app} is the well-known Frank-Wolfe algorithm, where we repeatedly perform ``atomic'' updates to the measure $\nu$: 
\begin{align}
\label{eq:FW-eta}
\nu_{t+1} \gets (1-\eta_t)\nu_t + \eta_t \zeta_t,
\end{align}
where $\eta_t$ is the step size, \eg, $\eta_t = \tfrac{2}{t+2}$. The direction $\zeta_t$ is found by solving the linear minimization subproblem: 
\begin{align}
\min_{\zeta \in \Gamma} ~ \inner{\gv(\mu) + \epsilon_d \gv(\nu_t)}{\gv(\zeta)}.
\end{align}
When $\Gamma = \Pcal$ consists of all distributions over $\Zds$, the above subproblem simplifies to: 
\begin{align}
\label{eq:FW-sub}
\min_{\zv \in \Zds} ~ \inner{\gv(\mu) + \epsilon_d \gv(\nu_t)}{\nabla_{\wv} \ell(\zv; \wv)},
\end{align}
\ie, we find a new poisoning point $\zv$ to add to the support of the poisoning distribution $\nu_t$, while the step \eqref{eq:FW-eta} adjusts the probability mass. 
One particularly appealing part of this algorithm is that after $t$ iterations, the candidate poisoning distribution $\nu_t$ is supported at most on $t+1$ points (assuming we start with some $\nu_0$ supported on a single point). 
We remark that the subproblem \eqref{eq:FW-sub} is often nonconvex (in particular for neural networks), and could be challenging to solve. 
The other difficulty is that an attacker often is not allowed to upload an entire poisoning distribution, so a resampling procedure (on $\nu$) will be necessary to create a poisoning set, which is why we opted for a more direct approach in the main paper.

Another possibility is to parameterize $\nu$ as the push-forward of some fixed distribution (\eg, the training distribution), \ie, $\nu = [T(\theta)]_{\#}\mu$, and we optimize the push-forward transformation $T(\theta)$.

\section{Additional Experiments}
\label{app:add_exp}

\subsection{Additional implementation details}
\label{sec:exp-add}
\textbf{Hardware and package:} experiments were run on a cluster with \texttt{T4} and \texttt{P100} GPUs. The platform we use is PyTorch~\citep{PaszkeGMLBCKLGADKYDRTCSFBC19}. 

\textbf{Model in details}: for the MNIST dataset, we examine three target models: Logistic Regression; a neural network (NN) with three layers, where we choose hidden size as 784 and apply leaky ReLU with $\mathtt{negative\_slope}=0.2$ for activation; and a convolutional neural network (CNN) with two convolutional layers with kernel size 3, maxpooling and two fully connected layers with hidden size 128. 

\textbf{Synthetic Datasets:} in \Cref{fig:heatmap_or} and \Cref{fig:heatmap}, we perform experiemnts on two synthetic datasets. (1) OR dataset: we simply use the OR dataset in 2D space in \Cref{fig:OR} and repeat each point for 50 times (200 samples in total) with small Gaussian noise. (2) 10-D Gaussian dataset: we use the $\mathtt{make\_classification}$ function in $\mathtt{sklearn.datasets}$ with 1000 samples and 10 features.

\begin{figure*}[ht]
    \centering
    \includegraphics[width=0.3\textwidth]{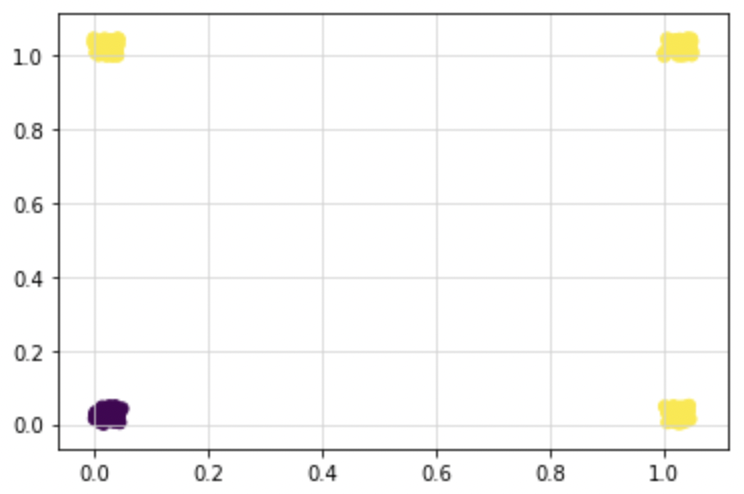}
    \caption{Here we visualize the OR dataset.} 
    \label{fig:OR}
\end{figure*}

\textbf{More on GradPC}: to choose proper target parameters (specifically, $\epsilon_w$), we use validation sets described in \Cref{sec:exp} for accuracy drop comparison. The GradPC attack never sees the test set during the construction of its perturbed models.

\textbf{Batch size:} for the Gradient Canceling experiments on MNIST, we set batch size as the size of entire training set (60000) for simplicity. For CIFAR-10 and TinyImageNet experiments, we set batch size as 1000 due to CUDA memory size constraint.

\textbf{Optimizer, learning rate scheduler and hyperparameters:}  we use SGD with momentum for optimization and the cosine learning rate scheduler \citep{loshchilov2016sgdr} for the Gradient Canceling algorithm. We set the initial learning rate as 0.5 and run 1000 epochs across every experiment.

\subsection{More on Parameter Corruption}

Recall in \Cref{tab:comparison} we compare TGDA with GradPC briefly. Here we show the complete results with more choices of $\epsilon_w$ and an additional baseline method called RandomPC in \citet{SunZRLL20} in \Cref{tab:comp2GradPC}.  
\begin{table*}[ht]
    \centering
    \small
    \caption{The attack accuracy/accuracy drop (\%) on the MNIST dataset. }
    \label{tab:comp2GradPC}    
    \setlength\tabcolsep{5pt}
    \scalebox{0.85}{\begin{tabular}{cccrrrrrr}
\toprule

\multirow{2}{*}[-.6ex]{\bf Target Model}
 & Clean & \bf TGDA & \multicolumn{3}{c}{\bf RandomPC} & \multicolumn{3}{c}{\bf GradPC}\\ 
\cmidrule(l{-1pt}r{2pt}){4-6}\cmidrule(l{2pt}r{0pt}){7-9}
 & Accuracy & Accuracy/Drop &$\epsilon_w=0.01$ & $\epsilon_w=0.1$ &$\epsilon_w=1$ & $\epsilon_w=0.01$ &$\epsilon_w=0.1$ & $\epsilon_w=1$ \\

\midrule
LR & 92.35 & 89.56 / 2.79 ($\epsilon_w =2.45$) & 91.94 / 0.41 &  81.24 / 11.11 &  24.66 / 67.69 & 91.91 / 0.44 & 89.72 / 2.63 & \bf 21.48 / 70.87 \\

NN & 98.04 & 96.54 / 1.50 ($\epsilon_w =0.55$) & 97.62 / 0.42 & 82.67 / 15.37 &  32.77 / 65.27 & 97.63 / 0.41 & 97.05 / 0.99 & \bf 31.14 / 66.90\\

CNN & 99.13 & 98.02 / 1.11 ($\epsilon_w =0.74$)  & 98.84 / 0.29 & 72.00 / 27.13 &  19.26 / 79.87 & 98.74 / 0.39 & 98.69 / 0.44 & \bf 12.98 / 86.15  \\

\bottomrule
\end{tabular}}
\end{table*}

\subsection{Least-square Regression}
\label{sec:ls}
\begin{figure*}[ht]
    \centering
    \includegraphics[width=1.0\textwidth]{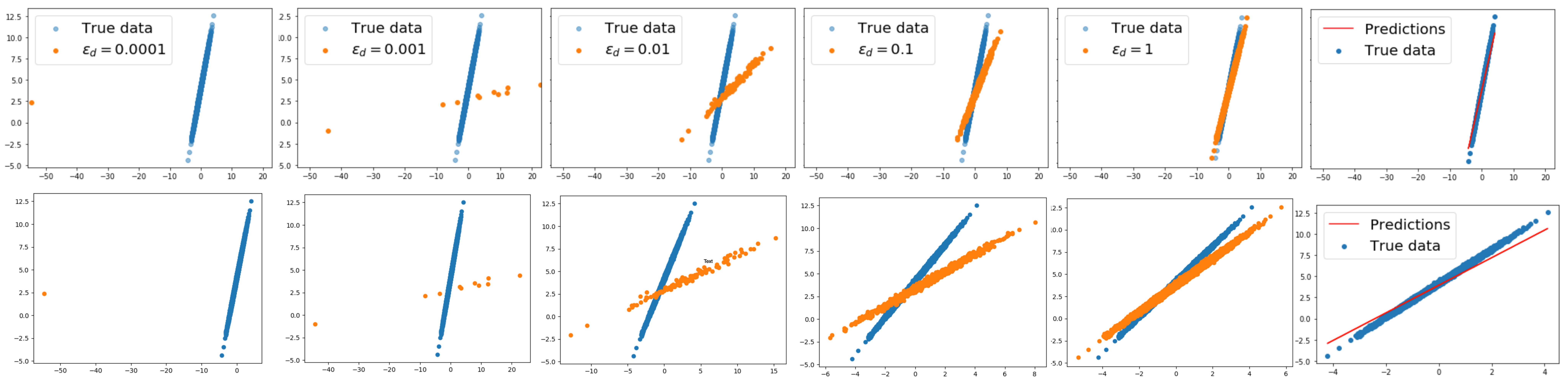}
    \caption{Here we run the Gradient Canceling algorithm on linear regression on a 2D Gaussian dataset. The first row displays all figures in the same scale for better comparison; and the second row shows the upper figures in their original scale for better viewing. (1) The left five figures show the poisoned points generated with different $\epsilon_d$. When $\epsilon_d$ is smaller, the poisoned points are farther from the data distribution. (2) The algorithm always generates the target parameter (the prediction) regardless of $\epsilon_d$.} 
    \label{fig:linear_gc}
\end{figure*}
 Recall that from \Cref{exm:ls} we conclude data poisoning with any $\epsilon_d > 0$ is possible for least-square regression. We perform GC attack on a synthetic 2D Gaussian dataset and visualize the results in \Cref{fig:linear_gc}. We observe that (1) the algorithm always generates the target parameter regardless of $\epsilon_d$, which immediately verifies our conclusion; (2) by increasing $\epsilon_d$,  the poison distribution $\nu$ gradually moves towards the data distribution $\mu$, which makes intuitive sense.
Moreover, recall that we may restrict the search of a poisoning distribution to the potentially much smaller subset $T_{\#}\Pi$ (where $\xv$ lies on the line spanned by $\gv(\mu)$), while in practice GC does not seem to always follow this theoretical construct.

\subsection{Comparison with Gradient Matching}

As we mentioned in \Cref{sec:gc}, one of the difference between Gradient Matching and our work is that there is no guarantee that after retraining over $\hat\nu$, gradient matching will arrive at the target model while our \Cref{alg:gc} explicitly aims to achieve this goal. We have shown in the \Cref{fig:linear_gc} that GC empirically achieve the target model regardless of $\epsilon_d$. By comparing with \Cref{fig:linear_gm}, we observe that gradient matching achieves different model parameters for every $\epsilon_d$. 

\begin{figure*}[ht]
    \centering
    \includegraphics[width=1.0\textwidth]{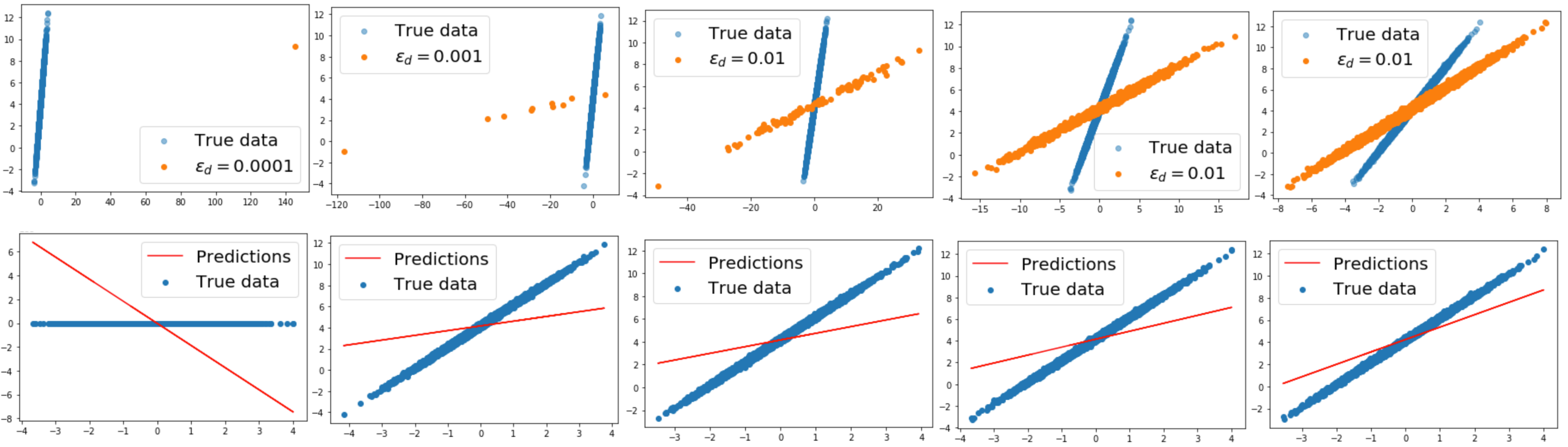}
    \caption{Here we run the Gradient Matching algorithm on linear regression on a 2D Gaussian dataset. The first row displays the poisoned points generated with different $\epsilon_d$; and the second row shows the different target parameters generated by the algorithm with different $\epsilon_d$.} 
    \label{fig:linear_gm}
\end{figure*}

\subsection{More on \Cref{fig:epsilon_w}}
Recall that in \Cref{fig:epsilon_w}, we fix $\epsilon_d$ and draw the learning curve for GC optimization for different $\epsilon_w$, where the $y$-axis indicates the normalized loss, \ie, $\|\gv(\chi)\|$. We observe that when $\tau > \epsilon_d$, $\|\gv(\chi)\|$ converges to a larger value, influenced by the distance between $\tau$ and $\epsilon_d$. 

Conversely, we fix $\epsilon_w$ (consequently, $\tau$) for different target models and repeat the MNIST experiments. In \Cref{fig:epsilon_p}, we again observe that when $\tau > \epsilon_d$, $\|\gv(\chi)\|$ converges at a relatively bigger number. Overall, we have confirmed the theoretical limitations proved in \Cref{sec:method}.
\begin{figure*}[hb]
    \centering
    \includegraphics[width=1.0\textwidth]{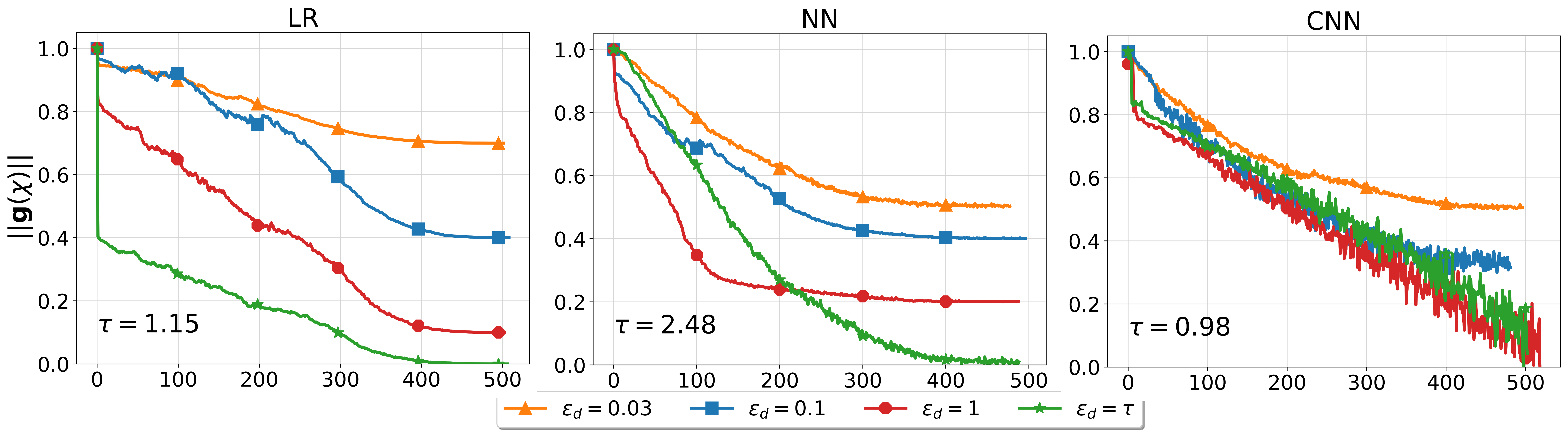}
    \caption{The learning curve for running GC on MNIST with different target models and $\epsilon_d$. Note that we fix $\epsilon_w$ for each model and print the respective $\tau$, and the loss indicates $\|\gv(\chi)\|$. The figure again confirms that GC cannot achieve $\wv$ if $\epsilon_d < \tau$.}
    \label{fig:epsilon_p}
\end{figure*}

\clearpage

\subsection{Scaling $\wv$}
We point out a subtlety in \Cref{example:log}: by scaling $\wv$ towards the origin, we do not change its accuracy (except the confidence it induces). However, the threshold $\tau$ tends to 0 and hence data poisoning succeeds in producing the target parameter $\wv$ with a smaller $\lambda$. In other words, less confident models are easier to poison to, which makes intuitive sense. For verification, we run the GC attack with scaled target parameter $\wv/2$ and compare it with the original target parameter $\wv$ in \Cref{tab:scale}. With the same target model accuracy, scaling $\wv$ significantly reduces its corresponding $\tau$, making it easier to poison to.

\begin{table}[ht]
    \centering
    \caption{The GC attack accuracy drop (\%) on  MNIST when scaling $\wv$ by half.}
    \label{tab:scale}    
    \setlength\tabcolsep{15pt}
    \scalebox{0.9}{\begin{tabular}{cccccl|rr}
\toprule

\bf Target Model & clean & GradPC  & $\tau(\wv)$ & $\tau(\wv/2)$\ & $\epsilon_d$ & \bf $\wv$ & \bf $\wv/2$  \\

\midrule
\multirow{3}{*}[-.6ex]{LR}  & \multirow{3}{*}[-.6ex]{92.35} & \multirow{3}{*}[-.6ex]{-70.87} & \multirow{3}{*}[-.6ex]{1.15} & \multirow{3}{*}[-.6ex]{0.54} & 0.03 & -22.97 & -44.11\\
& & & & &0.1 & -63.83 & -67.22  \\
& & & & & 1 & -67.01 & -79.55 \\
\midrule

\multirow{3}{*}[-.6ex]{NN} & \multirow{3}{*}[-.6ex]{98.04} & \multirow{3}{*}[-.6ex]{-20.03} & \multirow{3}{*}[-.6ex]{2.48} & \multirow{3}{*}[-.6ex]{1.41} & 0.03 & -6.10 & -9.29 \\
& & & & & 0.1& -9.77 & -11.01 \\
& & & & & 1&  -12.05 & -15.33 \\

\midrule

\multirow{3}{*}[-.6ex]{CNN} & \multirow{3}{*}[-.6ex]{99.13} & \multirow{3}{*}[-.6ex]{-24.78} & \multirow{3}{*}[-.6ex]{0.98} & \multirow{3}{*}[-.6ex]{0.42} & 0.03 & -9.55 & -12.03  \\
&&&&&0.1& -20.10 & -21.55 \\
&&&&& 1 & -23.80 & -24.56\\
\bottomrule
\end{tabular}}
\end{table}

The fact that simply scaling $\wv$ down could improve its poisoning reachability might seem surprising at first glance. However, this is due to a mismatch between how we train and how we test. It is best to explain this observation in the binary setting, where we note the mismatch between the common prediction rule
\begin{align}
    \hat{y} = \sign(\inner{\xv}{\wv}),
\end{align}
which is invariant to (positive) scaling (of $\wv$), and our training objective in finding a good parameter $\wv$, \eg, through logistic regression: 
\begin{align}
\label{eq:lr-app}
    \inf_{\wv} ~ \frac1n\sum_i \log [1+\exp(-y_i \inner{\xv_i}{\wv}],
\end{align}
which is not invariant to scaling (of $\wv$). 

Let us give an explicit example to further demonstrate this point. Consider three (cleaning) training points\footnote{This is in fact the smallest example: with 2 or fewer training points, logistic regression does not attain the infimum.} on the real line: 
\begin{align}
\label{eq:lr-toy}
    \xv_1 = \begin{bmatrix} 1 \\ 1\end{bmatrix}, y_1 = +; \quad 
    \xv_2 = \begin{bmatrix} -1 \\ 1\end{bmatrix}, y_2 = +; \quad 
    \xv_3 = \begin{bmatrix} 0 \\ 1\end{bmatrix}, y_3 = -; 
\end{align}
where we have padded 1 at the last entry of each $\xv$ (so that we can absorb the bias $b$ into $\wv$). Setting the derivative of \eqref{eq:lr-app} \wrt $\wv$ to zero we obtain: 
\begin{align}
    3 \cdot \gv(\wv) = -\frac{1}{1+\exp(w_1+w_2)}\begin{bmatrix}1\\ 1\end{bmatrix} - \frac{1}{1+\exp(w_2-w_1)} \begin{bmatrix} -1\\ 1\end{bmatrix} - \frac{1}{1+\exp(-w_2)} \begin{bmatrix}0\\ -1\end{bmatrix} = \zero.
\end{align}
Solving the above equation we have $\wv_\star = \begin{bmatrix} 0 \\ \ln 2\end{bmatrix}$. 

Now consider the scenario where we are given a target parameter $\wv = 2 \wv_\star$. According to our theory, the poisoning ratio 
\begin{align}
    \epsilon_d > \tau 
    &:\approx \max\left\{ 1.2 \left[\frac{-w_1-w_2}{1+\exp(w_1+w_2)} + \frac{w_1-w_2}{1+\exp(w_2-w_1)} + \frac{w_2}{1+\exp(-w_2)}\right], 0 \right\}
    \\
    &= \max\left\{ 1.2 w_2 \cdot \frac{\exp(w_2)-2}{1+\exp(w_2)}, 0 \right\}
    \\&= 0.48 \cdot 2\ln 2 \approx 0.67.
\end{align}
In other words, the poisoning set needs to be as large as 67\% of the training set, in order to produce $\wv = 2\wv_\star$. 
However, if we scale $\wv$ down to $\wv_\star$, then we do not even need to add any poisoned point, since $\wv_\star$ is already stationary (by definition). 
If we continue to scale $\wv$ down to say $0.5 \wv_\star$ (so that $\inner{\wv}{\gv(\wv)} < 0$ and hence $\tau = 0$), then for any $\epsilon > 0$, we may put $\epsilon$ copies of $\xv = \alpha \gv(\wv) \approx \tfrac{1}{\epsilon} \gv(\wv)$ as the poisoning set to produce $\wv$; see the detailed analysis below. 

More generally, consider adding $\epsilon$ copies of a poisoning data point at 
$
    \xv = \alpha \gv(\wv), y = +
$
(where $\alpha \in \RR$ will be determined later) 
so that the gradient on the clean and poisoned data is proportional to:
\begin{align}
    \gv(\wv) - \epsilon \frac{1}{1+\exp(\inner{\wv}{\xv})} \xv = \gv(\wv) - \epsilon \frac{1}{1+\exp(\inner{\wv}{\alpha\gv(\wv)})} \alpha\gv(\wv) = \gv(\wv) \cdot \left[1 - \epsilon \frac{\alpha}{1+\exp(\alpha \inner{\wv}{\gv(\wv)})}\right].
\end{align}
We can break the analysis into a few cases now: 
\begin{itemize}
    \item $\gv(\wv) = \zero$, \ie, we scale $\wv$ down to $\wv_\star$, in which case no poisoning point is needed to produce $\wv= \wv_\star$. 
    \item $\inner{\wv}{\gv(\wv)} = 0$, in which case the gradient reduces to $\gv(\wv) [ 1 - \epsilon \alpha /2 ]$. Therefore, for any $\epsilon > 0$, we may produce $\wv$ by putting $\epsilon$ copies of $\xv = \tfrac{2}{\epsilon} \gv(\wv)$.
    \item $\inner{\wv}{\gv(\wv)} < 0$, in which case for any $\epsilon > 0$, the function
    \begin{align}
        \alpha \mapsto 1 + \exp(\alpha \inner{\wv}{\gv(\wv)} - \epsilon \alpha
    \end{align}
    clearly has a zero $\alpha_*$ (easily seen by letting $\alpha \to \pm\infty$ and applying the intermediate value theorem). Thus again, we may produce $\wv$ by putting $\epsilon$ copies of $\xv$ at $\alpha_* \gv(\wv)$. (Note that $\alpha_* \to \infty$ if $\epsilon \to 0$; roughly $\alpha_* \approx \tfrac{1}{\epsilon}$.)
    \item $\inner{\wv}{\gv(\wv)} > 0$, in which case the function \begin{align}
        \alpha \mapsto 1 + \exp(\alpha \inner{\wv}{\gv(\wv)} - \epsilon \alpha
    \end{align}
    has a zero $\alpha_*$ iff $\epsilon \geq \tau$. Indeed, 
    \begin{align}
        1 = \epsilon \frac{\alpha}{1+\exp(\alpha \inner{\wv}{\gv(\wv)})} \iff 1 &= \frac{\epsilon}{\inner{\wv}{\gv(\wv)}} \cdot \frac{\alpha\inner{\wv}{\gv(\wv)}}{1+\exp(\alpha \inner{\wv}{\gv(\wv)})} 
        \\
        &\leq \frac{\epsilon}{\inner{\wv}{\gv(\wv)}} \cdot \sup_t \frac{t}{1+\exp(t)} 
        \\
        &= \frac{\epsilon}{\inner{\wv}{\gv(\wv)}} \cdot \Wcal(1/e).
    \end{align}
    Thus again, for any $\epsilon \geq \tau$, we may produce $\wv$ by putting $\epsilon$ copies of $\xv$ at $\alpha_* \gv(\wv)$.
\end{itemize}

In \Cref{fig:scaling} we observe that GC converged to \emph{nonzero} loss (i.e., unable to produce the target parameter) when $\epsilon_d < \tau$, for any learning rate we tried, while after scaling $\mathbf{w}$ down so that $\epsilon_d > \tau$, GC immediately converged to zero loss (without the need of tuning the learning rate), confirming our theoretical analysis above. We plan to further explore the scaling effect in future work. 

\begin{figure*}[htb]
    \centering
    \includegraphics[width=0.5\textwidth]{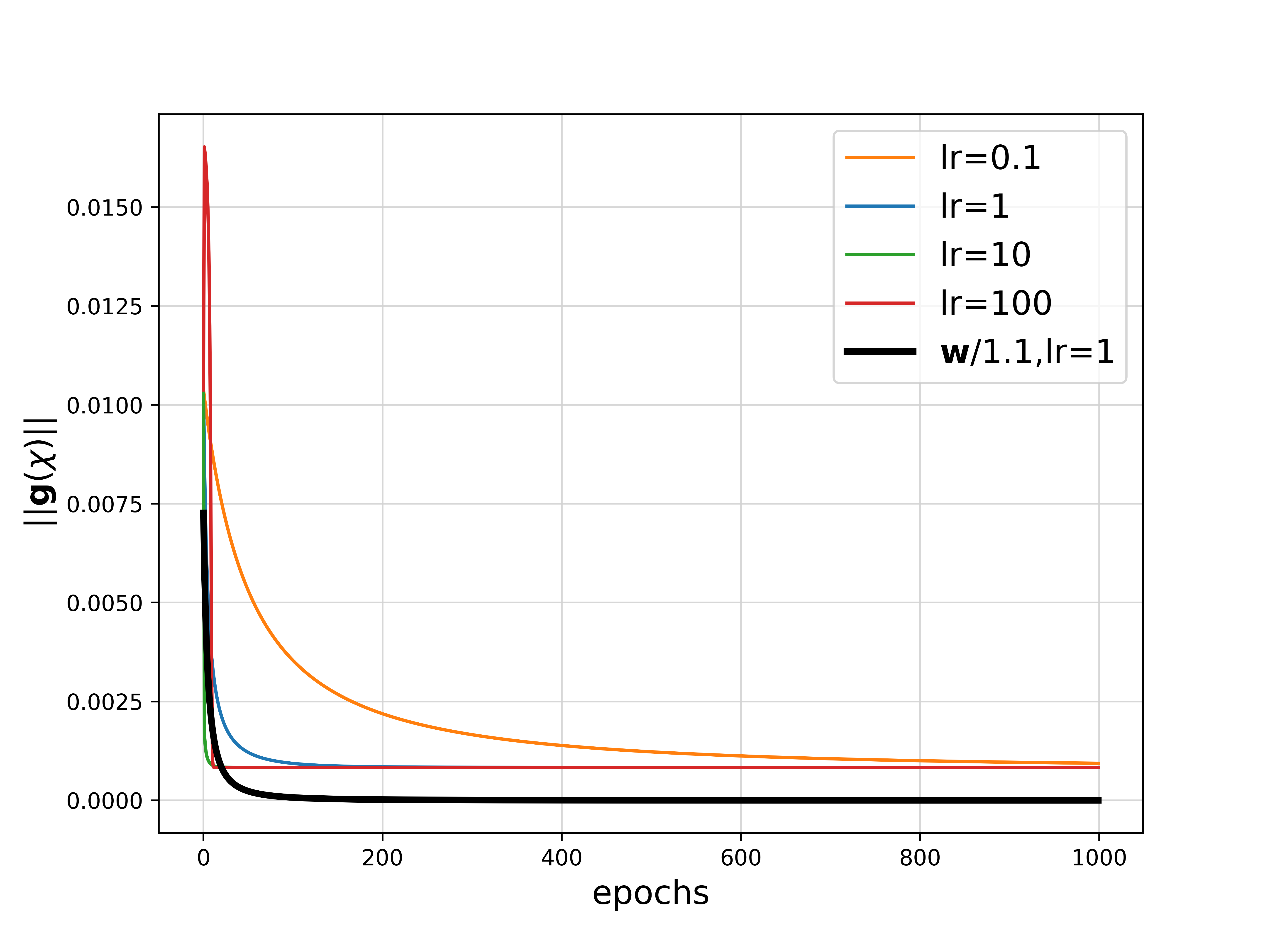}
    \caption{We plot the training curves of GC on the toy example (see \eqref{eq:lr-toy}). The colored curves represent GC attack on $\wv = \begin{bmatrix} 0 \\ 2\ln 2\end{bmatrix}$ with  $\epsilon=0.52<\tau\approx0.67$ under different learning rates; the black curve represents GC attack on the scaled target parameter $\wv/1.1 \approx \begin{bmatrix} 0 \\ 1.82\ln 2\end{bmatrix}$ with $\epsilon=0.52 > \tau \approx 0.51$ under learning rate 1.  } 
    \label{fig:scaling}
\end{figure*}

\subsection{Simulating Different Target Parameters}
Next, we verify if GC can achieve any desired target parameter. We choose poisoned models generated by TGDA attack as target parameters and perform PC. We discover that such parameters are also achievable by GC in \Cref{tab:simulate}, which further confirms that GC may be equipped with any other parameter corruption methods, regardless of how the target parameters are generated.

\begin{table}[ht]
    \centering
    \caption{Simulating TGDA attack ($\epsilon_d=1$) with Gradient Canceling attack on the MNIST dataset.}
    \label{tab:simulate}    
    \setlength\tabcolsep{15pt}
    \scalebox{0.9}{\begin{tabular}{cccclr}
\toprule

\bf Target Model & clean & TGDA  & $\tau$  & $\epsilon_d$ & \bf GC\\

\midrule
\multirow{4}{*}[-.6ex]{LR}  & \multirow{4}{*}[-.6ex]{92.35} & \multirow{4}{*}[-.6ex]{-8.97} & \multirow{4}{*}[-.6ex]{2.33} & 0.03 & -2.66 \\
& & & &0.1 & -3.39  \\
& & & & 1 & -5.53 \\
& & & & $\tau$ & -8.35 \\
\midrule
\multirow{3}{*}[-.6ex]{NN}  & \multirow{3}{*}[-.6ex]{98.04} & \multirow{3}{*}[-.6ex]{-5.49} & \multirow{3}{*}[-.6ex]{0.95} & 0.03 & -1.39 \\
& & & & 0.1 & -1.55 \\
& & & & 1 & -4.99 \\
\midrule

\multirow{3}{*}[-.6ex]{CNN}  & \multirow{3}{*}[-.6ex]{99.13} & \multirow{3}{*}[-.6ex]{-4.76} & \multirow{3}{*}[-.6ex]{0.49} & 0.03 & -0.98 \\
& & & & 0.1 & -2.10 \\
& & & & 1 & -4.68 \\

\bottomrule
\end{tabular}}
\end{table}

\subsection{GC against Defenses}
\label{sec:more-defenses}
Next, we examine the GC attack against three popular distribution-wise defenses. (1) Influence defense \citep{KL17} removes $\epsilon_d$ suspicious points according to higher influence functions; (2) Sever \citep{DiakonikolasKKLSS19} removes $\epsilon_d$ training points with the highest outlier scores, defined using the top singular value in the matrix of gradients; (3) Maxup defense \citep{GongRYL20} generates a set of augmented data with random perturbations and then aims at minimizing the worst case loss over the augmented data. 

We present our results on the MNIST dataset in \Cref{tab:defense_full} and observe that: (1) Among the three defenses, Sever is the most effective one and can significantly reduce the effectiveness of GC.
(2) Clipping the poisoned data to the range of clean training set makes GC more robust against all defenses, with the tradeoff of attack effectiveness.
(3) Larger $\epsilon_d$ makes the attack generally more robust, which matches our observation in least-squared regression.
\begin{table*}[ht]
\addtolength{\tabcolsep}{-3pt}
    \centering
    \caption{The accuracy drop (\%) Gradient Canceling attack (w/wo clipping) introduces on MNIST with Influence/Sever/MaxUp defense (+ indicates the accuracy increased by defenses). GC: original Gradient Canceling attack; GC-c: GC with clipped output; GC-d: GC after defense; GC-cd: GC-c after defense.}
    \label{tab:defense_full}    
    \setlength\tabcolsep{5pt}
    \scalebox{0.8}{\begin{tabular}{rrrrrrrrrrr}
\toprule

\multirow{2}{*}[-.6ex]{\bf Model} & \multirow{2}{*}[-.6ex]{Clean Acc}& \multirow{2}{*}[-.6ex]{$\epsilon_d$} &\multirow{2}{*}[-.6ex]{GC} & \multirow{2}{*}[-.6ex]{GC-c} &\multicolumn{2}{c}{Influence} & \multicolumn{2}{c}{Sever} & \multicolumn{2}{c}{MaxUp} \\
 
\cmidrule(l{0pt}r{10pt}){6-7}\cmidrule(l{0pt}r{10pt}){8-9}\cmidrule(l{0pt}r{10pt}){10-11}
& & & & & GC-d & GC-cd & GC-d  & GC-cd & GC-d  & GC-cd\\

\midrule
\multirow{3}{*}[-.6ex]{LR} & \multirow{3}{*}[-.6ex]{92.35} & 0.03 &-22.79 &  -11.28 & -21.99 / +0.80  & -11.17 / +0.11 & -12.81 / \bf +9.98 & -9.66 / \bf +1.62 & -22.59 / +0.20 & -11.26 / +0.02 \\

& & 0.1 & -63.83 & -26.77 & -63.51 / +0.32 & -26.67 / +0.10 & -59.79 / \bf +4.04 & -25.53 / \bf +1.24 & -63.65 / +0.18 & -26.67 / +0.10 \\

& & 1 & -67.01 & -28.99 & -66.75 / +0.26 & -26.71 / +0.06  & -65.01 / \bf +2.00 & -27.89 / \bf +1.10 & -66.02 / +0.09 & -28.97 / +0.02 \\

\midrule

\multirow{3}{*}[-.6ex]{NN} & \multirow{3}{*}[-.6ex]{98.04} & 0.03 & -6.10 & -3.25 & -5.59 / +0.51 & -3.16 / +0.09 & -3.22 / \bf +2.88 & -2.26 / \bf +0.90  & -6.08 / +0.02 & -3.24 / +0.01 \\
& & 0.1 & -9.77 & -5.10 & -9.32 / +0.45 & -5.02 / +0.08 & -7.66 / \bf +2.11 & -4.46 / \bf +0.56 & -9.76 / +0.01 & -5.10 / +0.00\\
& & 1 & -12.05 & -6.53 &  -11.65 / +0.40 & -6.48 / +0.05 & -10.02 / \bf +2.03 & -6.11 / \bf +0.42 & -12.04 / +0.01 & -6.53 / +0.00\\
\midrule

\multirow{3}{*}[-.6ex]{CNN} & \multirow{3}{*}[-.6ex]{99.13}  & 0.03 & -9.55 & -5.87 & -8.57 / +0.98 & -5.56 / +0.31 & -5.55 / \bf +4.00 & -4.36 / \bf +1.51 & -9.39 / +0.16 & -5.83 / +0.04 \\
& & 0.1 & -20.10 & -12.50 & -19.19 / +0.91 & -12.35 / +0.15 & -16.55 / \bf +3.55 & -11.32 / \bf +1.18 & -20.06 / +0.04 & -12.48 / +0.02 \\
& & 1 & -23.80 & -13.32 & -23.10 / +0.70 & -13.21 / +0.11 & -21.05 / \bf +2.75 & -12.51 / \bf +0.81 & -23.79 / +0.01 & -13.32 / +0.00\\

\bottomrule
\end{tabular}}
\end{table*}

\subsection{Visualization of Poisoned Images}

Finally, we visualize some poisoned images generated by the GC attack in \Cref{fig:mnist} and \Cref{fig:cifar}.

\begin{figure*}[htb]
    \centering
    \includegraphics[width=1.0\textwidth]{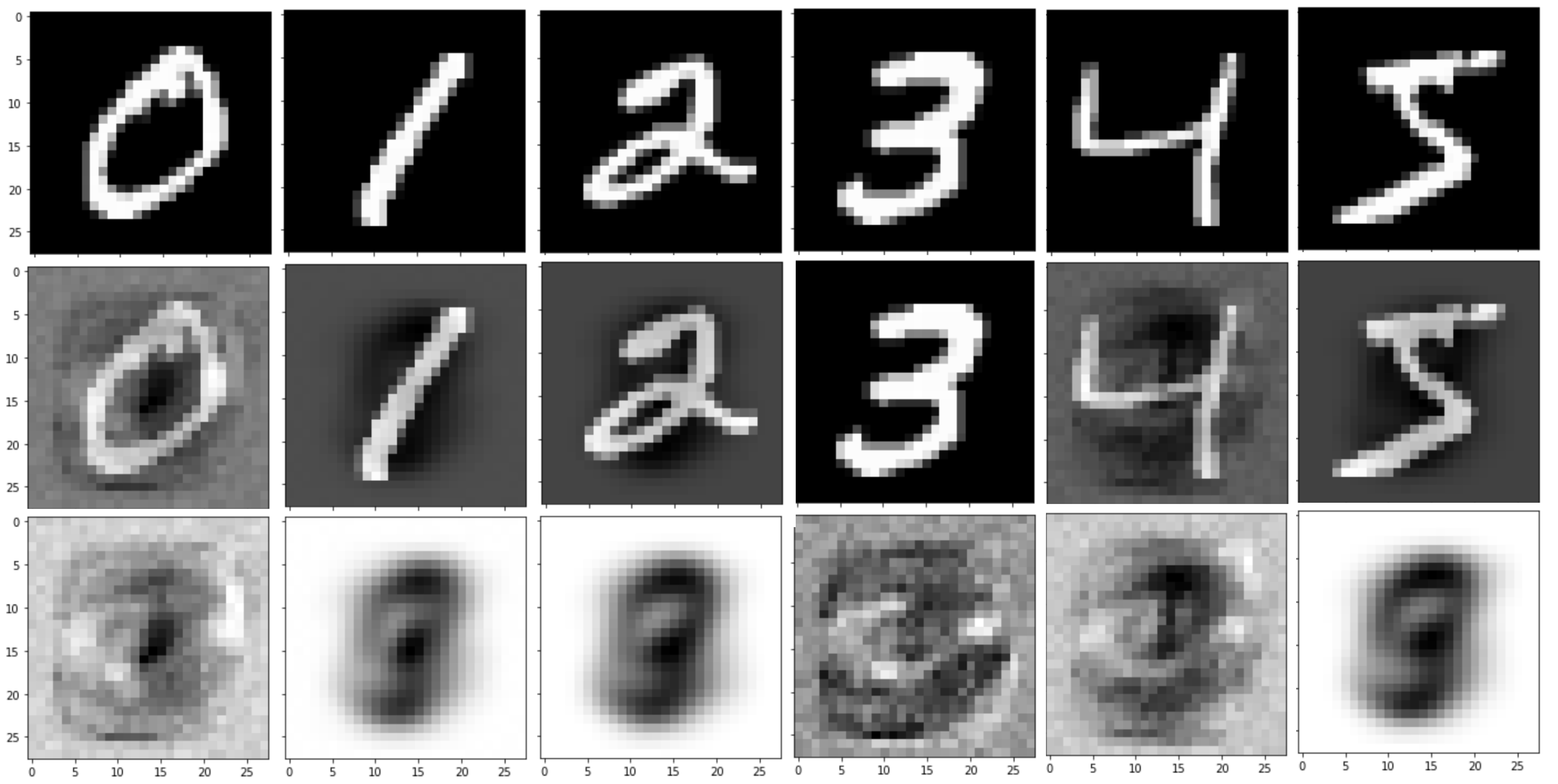}
    \caption{We visualize some poisoned images generated by the GC attack on the MNIST dataset. The first row shows the clean samples, the second row shows the poisoned samples; the third row displays the perturbation.} 
    \label{fig:mnist}
\end{figure*}

\begin{figure*}[htb]
    \centering
    \includegraphics[width=1.0\textwidth]{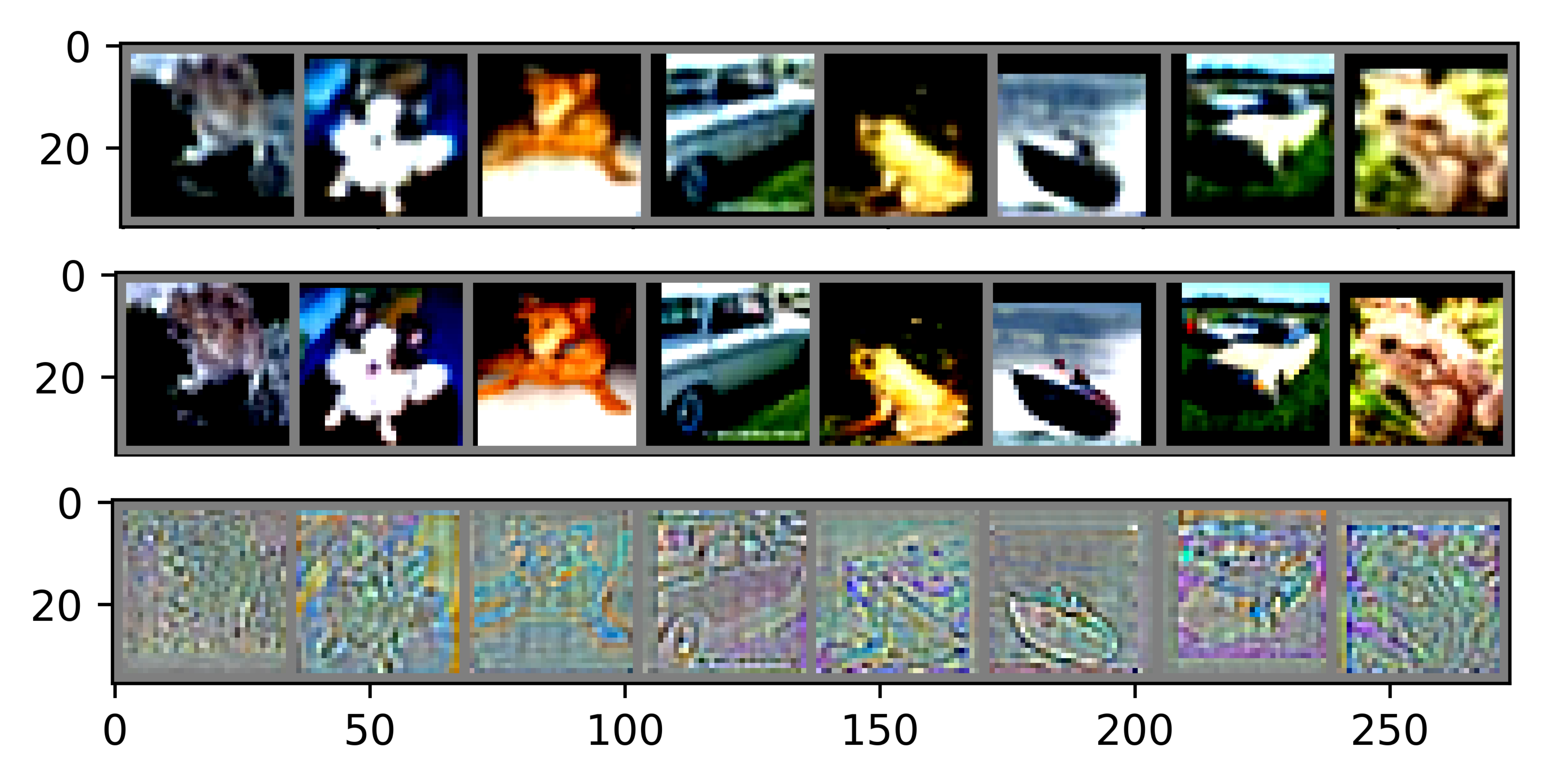}
    \caption{We visualize some poisoned images generated by the GC attack on the CIFAR-10 dataset. The first row shows the clean samples, the second row shows the poisoned samples; the third row displays the perturbation.} 
    \label{fig:cifar}
\end{figure*}
\clearpage

\subsection{Comparison with Replacing Attack}
\label{app:replace}

In this work we only consider an adversary who is restricted to \emph{add} corrupted points $\Dpo$ to the intact (clean) training set $\Dtr$, while an even stronger attacker might consider replacing part of $\Dtr$ with $\Dpo$ (also closely related to the nasty noise model \cite{BshoutyEK02}). We first formulate the general case: recall that we consider the mixed distribution $\chi = (1-\lambda)\mu + \lambda \nu$ of the clean distribution $\mu$ and poisoned distribution $\nu$, where $\lambda$ is the proportion of poisoning data. Then, replacing part of the clean training data is equivalent to:
\begin{align}
    \chi = (1-\lambda')\mu' + \lambda' \nu,
\end{align}
where $\mu'$ is a subset of $\mu$, and $\lambda'=|\Dpo|/|\Dtr|$. Empirically, with the ability to replace data points we may still apply Gradient Canceling in a straightforward manner: the only difference is that in Algorithm 1 we change $\mu$ to $\mu'$, a random subset of $\mu$. Following this idea, we perform a simple experiment: we choose $\epsilon_d=0.03$, and choose $\mu'$ to be a random subset of $\mathcal{D}_{tr}$, with size $\tfrac{1}{1+\epsilon_d}|\mathcal{D}_{tr}| \approx 0.97|\mathcal{D}_{tr}|$.  The results on MNIST are presented  below in \Cref{tab:replace}:

\begin{table}[ht]
    \centering
    \caption{Gradient Canceling attack ($\epsilon_d=0.03$) with adding-only vs replacing-only on the MNIST dataset.}
    \label{tab:replace}    
    \setlength\tabcolsep{15pt}
    \scalebox{0.9}{\begin{tabular}{cccc}
\toprule

\bf Target Model & clean & \bf GC (adding-only) & \bf GC (replacing)\\

\midrule
LR & 92.35 & -22.97 & -23.10\\
\midrule
NN &98.04 & -6.10 & -6.35\\
\midrule
CNN & 99.13 & -9.55 & -9.62\\

\bottomrule
\end{tabular}}
\end{table}

We observe that the ability to replace clean training data is indeed (slightly) more powerful than the corresponding adding-only attack. Notably, we remove training samples randomly, which may not be relatively weak. Ideally, an adversary would remove the most important points (e.g., in \cite{IlyasPELM22}) to further reduce the test accuracy. This improved replacing attack might be worth future exploration, although we note that it is less likely to be applicable when an attacker does not have direct access to a victim's infrastructure.

\section{Selecting Target Parameters}
\label{app:tar}
Here we discuss how to select an appropriate target parameter $\wv$ for the GC attack. In principle, there are two major factors regarding the selection of target parameters $\wv$:
(1) strength of $\wv$, measured by the test accuracy drop it incurs;
(2) poisoning reachability, measured by the optimality condition (i.e., the empirical loss in \Cref{eq:loss_function}).
We want to choose a $\wv$ that is both reachable and as strong as possible. Next, we discuss both criteria in details:
\begin{itemize}
    \item \textbf{Strength of $\wv$}: (a) existing works (e.g., \cite{KohSL18,SuyaMSET21}) only explored rudimentary ways to construct target parameters (e.g., through the label flip attack), and thus are less effective in casting particularly powerful target parameters; (b) with GradPC, we can now easily quantify the strength of a target parameter $\wv$ using $\epsilon_w$ (in Table 1); 
    (c) thus in practice, we first prepare a sequence of target parameters $\{\wv_k: k \in K\}$ with $|K|$ different $\epsilon_w$, and then send them all to the reachability test (in the next step).
    \item \textbf{Reachability test}: 
    (a) given the list of $\{\wv_k: k \in K\}$, we first calculate every corresponding $\tau(\wv_k)$, such that we can already rule out a few choices where $\epsilon_d < \tau$ (that we know GC cannot achieve with the existing budget $\epsilon_d$). After this process, we only keep a subset of target parameters $\{\wv_k: k \in \bar K\}$;
    (b) next, we run GC for each $\{\wv_k : k \in \bar K\}$, and examine if GC can achieve them by checking the loss upon convergence. We only keep those $\wv_k$'s that return a loss smaller than a margin (this margin is defined by one-tenth of the initial loss) when $\epsilon_d\approx\tau$.
    (c) finally, we empirically select a target parameter $\wv$ with the largest accuracy drop on the validation set.
\end{itemize}

\clearpage
\printbibliography[title={Extra References},filter=appendixOnlyFilter]
\end{document}